\algrenewcommand\alglinenumber[1]{{\textcolor{gray}{\sf\scriptsize#1}}}
\algrenewcommand\algorithmicindent{1.3em}%
\newcommand\markUnbabel{\Cancer}
\newcommand\markIT{\Leo}
\newtheorem{definition}{Definition}
\newtheorem{proposition}{Proposition}
\newtheorem{lemma}{Lemma}
\definecolor{vladcol}{rgb}{0.56, 0.27, 0.52}
\newcommand{\andre}[1]{\textcolor{purple}{[AM: #1]}}
\newcommand{\vlad}[1]{\textcolor{vladcol}{[VN: #1]}}
\newcommand{\goncalo}[1]{\textcolor{blue}{[GC: #1]}}
\newcommand{\andre}[1]{}
\newcommand{\vlad}[1]{}
\newcommand{\goncalo}[1]{}
\newcommand*{\wrt}{\textit{w.\hspace{.07em}r\hspace{.07em}.t.}\@\xspace}
\newcommand*{\eg}{\textit{e.\hspace{.07em}g.}\@\xspace}
\newcommand*{\ie}{\textit{i.\hspace{.07em}e.}\@\xspace}
\newcommand*{\lhs}{\textit{l.\hspace{.07em}h\hspace{.07em}.s.}\@\xspace}
\newcommand*{\rhs}{\textit{r.\hspace{.07em}h\hspace{.07em}.s.}\@\xspace}
\newcommand{\tr}{\top}
\newcommand{\amap}{\bm{\pi}}
\newcommand{\reals}{\mathbb{R}}
\newcommand{\pfrac}[2]{\frac{\partial #1}{\partial #2}}
\newcommand{\simplex}{\triangle}
\newcommand\pp{p}
\newcommand\p{\bm{\pp}}
\newcommand\xx{z}
\newcommand\x{\bm{\xx}}
\newcommand{\figref}[1]{Figure~\ref{fig:#1}}
\newcommand{\eqnref}[1]{Equation~\ref{eq:#1}}
\newcommand{\HHs}{\mathsf{H}^\textsc{S}}
\newcommand{\HHt}{\mathsf{H}^{\textsc{T}}}
\newcommand{\ones}{\bm{1}}
\newcommand{\EE}{\mathbb{E}}
\newcommand{\secref}[1]{Section~\ref{sec:#1}}
\newcommand{\appref}[1]{Appendix~\ref{sec:#1}}
\newcommand\thresh{\tau}
\DeclareMathOperator*{\sigmoid}{\mathsf{sigmoid}}
\DeclareMathOperator*{\argmax}{\mathsf{argmax}}
\DeclareMathOperator*{\argmin}{\mathsf{argmin}}
\DeclareMathOperator*{\diag}{\mathsf{diag}}
\DeclareMathOperator*{\softmax}{\mathsf{softmax}}
\DeclareMathOperator*{\sparsemax}{\mathsf{sparsemax}}
\newcommand*\entmaxtext{entmax\xspace}
\DeclareMathOperator*{\entmax}{\mathsf{\entmaxtext}}
\newcommand*\aentmax[1][\alpha]{\mathop{\mathsf{#1}\textnormal{-}\mathsf{\entmaxtext}}}
\DeclareMathOperator{\att}{\mathsf{Att}}
\DeclareMathOperator{\ath}{\mathsf{Head}}
\newcommand{\langp}[2]{\textsc{#1}$\shortrightarrow$\textsc{#2}}
\title{Adaptively Sparse Transformers}
\author{
Gon\c{c}alo M. Correia\textsuperscript{\markIT{}} \\
\href{mailto:goncalo.correia@lx.it.pt}{\tt goncalo.correia@lx.it.pt}
\And
Vlad Niculae\textsuperscript{\markIT{}} \\
\href{mailto:vlad@vene.ro}{\tt vlad@vene.ro} \\[1ex]
\textsuperscript{\markIT{}}Instituto de Telecomunica\c{c}\~oes, Lisbon,
Portugal\\
\textsuperscript{\markUnbabel}Unbabel, Lisbon, Portugal\\
\And
Andr\'e F.T. Martins\textsuperscript{\markIT{} \markUnbabel{}} \\
\href{mailto:andre.martins@unbabel.com}{\tt andre.martins@unbabel.com}\\
}
\date{}
\begin{document}

\maketitle%

\begin{abstract}
Attention mechanisms have become ubiquitous in NLP. Recent
architectures, notably the Transformer, learn powerful context-aware
word representations through layered, multi-headed attention. The
multiple heads learn diverse types of word relationships. However,
with standard softmax attention, all attention heads are dense,
assigning a non-zero weight to all context words.
%
%
In this work, we introduce the adaptively sparse Transformer, wherein
attention heads have flexible, context-dependent sparsity patterns.
This sparsity is accomplished by replacing softmax with
$\alpha$-\entmaxtext{}: a differentiable generalization of softmax
that allows low-scoring words to receive precisely zero weight.
Moreover, we derive a method to automatically learn the $\alpha$
parameter -- which controls the shape and sparsity of
$\alpha$-\entmaxtext{} -- allowing attention heads to choose between
focused or spread-out behavior.
Our adaptively sparse Transformer improves interpretability and head
diversity when compared to softmax Transformers on machine
translation datasets. Findings of the quantitative and qualitative
analysis of our approach include that heads in different layers learn
different sparsity preferences and tend to be more diverse in their
attention distributions than softmax Transformers. Furthermore, at no
cost in accuracy, sparsity in attention heads helps to uncover
different head specializations.
\end{abstract}%

\section{Introduction}%

The Transformer architecture~\citep{vaswani2017attention} for deep
neural networks has quickly risen to prominence in NLP through its
efficiency and performance, leading to improvements in the state of
the art of Neural Machine Translation
\citep[NMT;][]{marian,ott2018scaling}, as well as inspiring other
powerful general-purpose models like BERT~\citep{devlin2018bert} and
\mbox{GPT-2}~\citep{radford2019language}. At the heart of the
Transformer lie \emph{multi-head attention} mechanisms: each word is
represented by multiple different weighted averages of its relevant
context. As suggested by recent works on interpreting attention head
roles, separate attention heads may learn to look for various
relationships between tokens~\citep{tang2018why,raganato2018analysis,
marecek-rosa-2018-extracting,bert-rediscovers,specialized}.

\begin{figure}[t]
    \centering
    \includegraphics[width=\columnwidth]{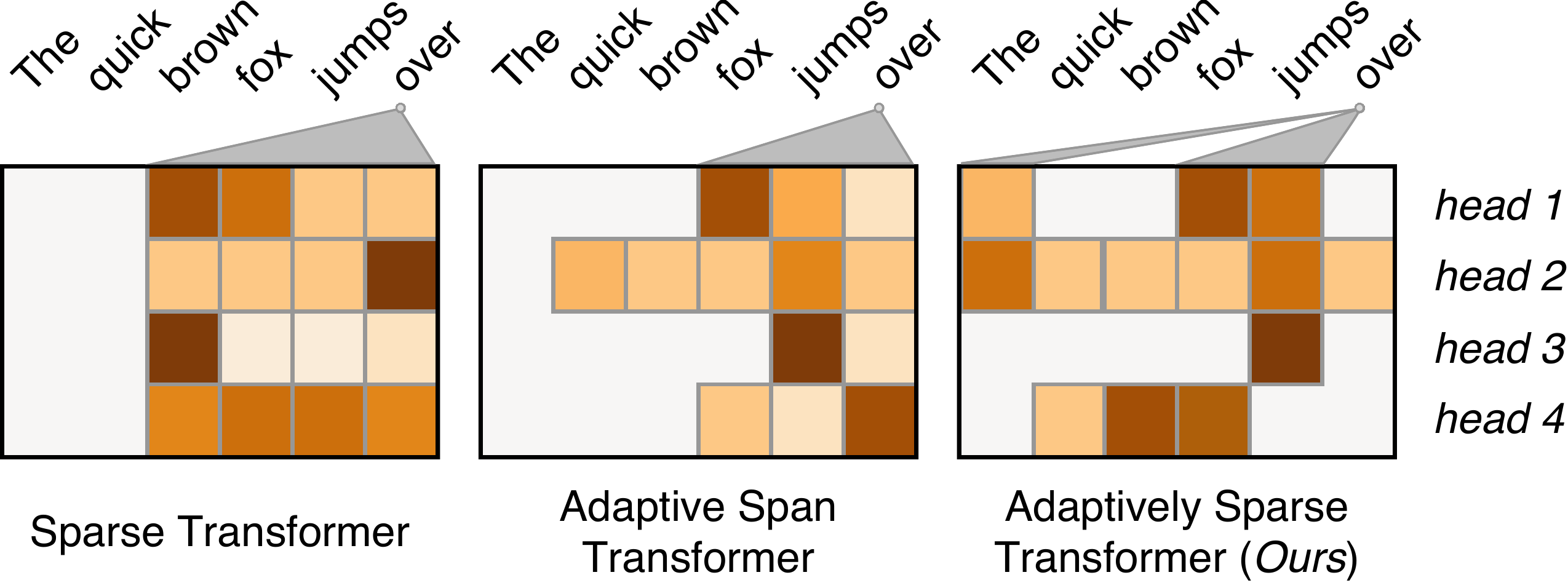}
    \caption{Attention distributions of different self-attention heads for the
    time step of the token ``over'', shown to compare our model to other
    related work. While the sparse
    Transformer~\citep{openai_sparse_transf} and the adaptive span
    Transformer~\citep{Sukhbaatar2019} only attend to words within a
    contiguous span of the past tokens, our model is not only able to
    obtain different and not necessarily contiguous sparsity patterns for
    each attention head, but is also able to tune its support over which
    tokens to attend adaptively.}
    \label{fig:comparison}
\end{figure}

The attention distribution of each head is predicted typically using
the \textbf{softmax} normalizing transform. As a result, all context
words have non-zero attention weight. Recent work on single attention
architectures suggest that using sparse normalizing transforms in
attention mechanisms such as sparsemax -- which can yield exactly
zero probabilities for irrelevant words -- may improve performance
and interpretability~\citep{malaviya2018sparse,deng2018latent,entmax}. Qualitative
analysis of attention heads \citep[Figure~5]{vaswani2017attention}
suggests that, depending on what phenomena they capture, heads tend
to favor flatter or more peaked distributions.

Recent works have proposed sparse Transformers~\citep{openai_sparse_transf} and adaptive span
Transformers~\citep{Sukhbaatar2019}. However,
the ``sparsity" of those models only limits the attention to a
contiguous span of past tokens, while in this work
we propose a \textbf{highly adaptive} Transformer model
that is capable of attending to a sparse set of words that are not
necessarily contiguous. \figref{comparison} shows the relationship of
these methods with ours.

Our contributions are the following:

\begin{itemize}
\item We introduce \textbf{sparse attention} into the
Transformer architecture, showing that it eases
interpretability and leads to slight accuracy gains.
\item We propose an adaptive version of sparse attention,
where the shape of each attention head is {\bf learnable} and can vary continuously and
dynamically between the dense limit case of \emph{softmax} and the sparse,
piecewise-linear \emph{sparsemax} case.\footnote{
Code and pip package available at \url{https://github.com/deep-spin/entmax}.}
\item We make an extensive analysis of the added interpretability of these
models, identifying both crisper examples of attention head behavior observed in
previous work, as well as novel behaviors unraveled thanks to the sparsity
and adaptivity of our proposed model.
\end{itemize}

\section{Background}
\subsection{The Transformer}

In NMT, the Transformer~\citep{vaswani2017attention} is a
sequence-to-sequence (seq2seq) model which maps an input sequence to
an output sequence through hierarchical \textbf{multi-head attention}
mechanisms, yielding a dynamic, context-dependent strategy for
propagating information within and across sentences. It contrasts
with previous seq2seq models, which usually rely either on costly
gated recurrent operations \citep[often
LSTMs:][]{bahdanau2014neural,luong2015effective} or static
convolutions~\citep{convseq}.

Given $n$ query contexts and $m$ sequence items under consideration,
attention mechanisms compute, for each query, a weighted
representation of the items. The particular attention mechanism used
in \citet{vaswani2017attention} is called \emph{scaled dot-product
attention}, and it is computed in the following way:
\begin{equation}
    \att(\bm{Q}, \bm{K}, \bm{V}) = \amap
\left(\frac{\bm{Q}\bm{K}^\tr}{\sqrt{d}}\right) \bm{V},
    \label{eq:att_scaled_dot}
\end{equation}
where $\bm{Q} \in \reals^{n \times d}$ contains representations of the
queries, $\bm{K}, \bm{V} \in \reals^{m \times d}$
are the \emph{keys} and \emph{values} of the items attended over,
and $d$ is the dimensionality of these
representations.
The $\amap$ mapping normalizes row-wise using \textbf{softmax},
$\amap(\bm{Z})_{ij} = \softmax(\bm{z}_i)_j$, where
\begin{equation}\label{eq:softmax}
    \softmax(\bm{z}) = \frac{\exp(z_j)}{\sum_{j'} \exp(z_{j'})}.
\end{equation}
In words, the \emph{keys} are used to compute a relevance score
between each item and query. Then, normalized attention weights are computed
using softmax, and these are used to weight the \emph{values} of each item at each
query context.

However, for complex tasks, different parts of a sequence may be relevant in
different ways, motivating \emph{multi-head attention} in Transformers.
This is simply the application of
Equation~\ref{eq:att_scaled_dot} in parallel $H$ times, each with a different,
learned linear transformation that allows specialization:
\begin{equation}\label{eq:head}%
\hspace{-.01ex}%
\ath_i(\bm{Q}\!,\!\bm{K}\!,\!\bm{V})\!=\!\att(\bm{QW}_i^Q\!\!,\bm{KW}_i^K\!\!,\bm{VW}_i^V\!)%
\hspace{-1.5ex}%
\end{equation}
In the Transformer, there are three separate multi-head attention mechanisms for
distinct purposes:
\begin{itemize}
\item \textbf{Encoder self-attention:} builds rich, layered representations of
each input word, by attending on the entire input sentence.
\item \textbf{Context attention:} selects
a representative weighted average of the encodings of the input words, at each
time step of the decoder.
\item \textbf{Decoder self-attention:} attends over the partial output sentence
fragment produced so far.
\end{itemize}
Together, these mechanisms enable the contextualized flow of information between
the input sentence and the sequential decoder.

\subsection{Sparse Attention}

The softmax mapping (Equation~\ref{eq:softmax}) is elementwise
proportional to $\exp$, therefore it can never assign a weight of
\textbf{exactly zero}. Thus, unnecessary items are still taken into
consideration to some extent. Since its output sums to one, this
invariably means less weight is assigned to the relevant items,
potentially harming performance and
interpretability~\citep{jain2019attention}. This has motivated a line
of research on learning networks with \emph{sparse}
mappings~\citep{sparsemax,fusedmax,louizos,shao2019ssn}. We focus on
a recently-introduced flexible family of transformations,
$\alpha$-\entmaxtext~\citep{blondel2019learning,entmax}, defined as:
\begin{equation}\label{eq:define_entmax}
    \aentmax(\bm{z}) \coloneqq
    \argmax_{\p \in \simplex^d} \bm{p}^\top\bm{z} + \HHt_{\alpha}(\bm{p}),
\end{equation}
where $\simplex^d \coloneqq
\{\p \in \reals^d:\sum_{i} \pp_i = 1\}$
is the \emph{probability simplex}, and, for $\alpha\geq1$,
$\HHt_\alpha$ is the Tsallis continuous family of entropies
\citep{Tsallis1988}:
\begin{equation}\label{eq:tsallisdef}
    \HHt_{\alpha}(\bm{p})\!\coloneqq\!
\begin{cases}
\frac{1}{\alpha(\alpha-1)}\sum_j\!\left(p_j - p_j^\alpha\right)\!, &
\!\!\!\alpha \ne 1,\\
-\sum_j \pp_j \log \pp_j, &
\!\!\!\alpha = 1.
\end{cases}
\end{equation}
This family contains the well-known Shannon and Gini entropies,
corresponding to the cases $\alpha=1$ and $\alpha=2$, respectively.

\eqnref{define_entmax} involves a convex optimization subproblem. Using the
definition of $\HHt_\alpha$, the optimality conditions may be used to derive the
following form for the solution (\appref{bgform}):
\begin{equation}\label{eq:entmax_form}
\aentmax(\x) = [(\alpha - 1){\x} - \thresh \ones]_+^{\nicefrac{1}{\alpha-1}},
\end{equation}
where $[\cdot]_+$ is
the positive part (ReLU) function, $\bm{1}$ denotes the vector of all ones, and
$\thresh$ -- which acts like a threshold -- is the Lagrange multiplier
corresponding to the $\sum_i \pp_i=1$ constraint.

\paragraph{Properties of {\boldmath $\alpha$}-\entmaxtext.}
The appeal of $\alpha$-\entmaxtext for attention rests on the
following properties. For $\alpha=1$ (\ie, when $\HHt_\alpha$ becomes
the Shannon entropy), it exactly recovers the softmax mapping (We
provide a short derivation in \appref{softmax}.). For all $\alpha>1$
it permits sparse solutions, in stark contrast to softmax. In
particular, for $\alpha=2$, it recovers the sparsemax mapping
\citep{sparsemax}, which is piecewise linear. In-between, as $\alpha$
increases, the mapping continuously gets sparser as its curvature
changes.

To compute the value of $\alpha$-\entmaxtext, one must find the
threshold $\thresh$ such that the \rhs in \eqnref{entmax_form} sums
to one. \citet{blondel2019learning} propose a general bisection
algorithm. \citet{entmax} introduce a faster, exact algorithm for
$\alpha=1.5$, and enable using $\aentmax$ with fixed $\alpha$ within
a neural network by showing that the $\alpha$-\entmaxtext Jacobian
\wrt $\x$ for $\p^\star = \aentmax(\x)$ is

\begin{equation}
\begin{aligned}
\pfrac{\aentmax(\x)}{\x} = \diag(\bm{s}) - \frac{1}{\sum_j s_j} \bm{ss}^\top, \\
\text{where}\quad s_i = \begin{cases}(\pp_i^\star)^{2-\alpha}, & \pp_i^\star > 0, \\
0,& \pp_i^\star = 0. \\\end{cases}
\end{aligned}
\end{equation}

Our work furthers the study of $\alpha$-\entmaxtext by providing a
derivation of the Jacobian {\bf \wrt the hyper-parameter}
$\boldsymbol{\alpha}$ (\secref{adaptive}), thereby allowing the shape
and sparsity of the mapping to be learned automatically. This is
particularly appealing in the context of multi-head attention
mechanisms, where we shall show in \secref{stats} that different
heads tend to learn different sparsity behaviors.

\section{Adaptively Sparse Transformers\\\quad with {\boldmath $\alpha$}-\entmaxtext}
\label{sec:adaptive}

We now propose a novel Transformer architecture wherein we simply
replace softmax with $\alpha$-\entmaxtext{} in the attention heads.
Concretely, we replace the row normalization $\amap$ in
\eqnref{att_scaled_dot} by

\begin{equation}
    \amap(\bm{Z})_{ij} = \aentmax(\bm{z}_i)_j
\end{equation}

This change leads to sparse attention weights, as long as
$\alpha>1$; in particular, $\alpha=1.5$ is a sensible starting
point~\citep{entmax}.

\paragraph{Different {\boldmath $\alpha$} per head.}
Unlike LSTM-based seq2seq models, where $\alpha$ can be more easily
tuned by grid search, in a Transformer, there are many attention
heads in multiple layers. Crucial to the power of such models, the
different heads capture different linguistic phenomena, some of them
isolating important words, others spreading out attention across
phrases \citep[Figure~5]{vaswani2017attention}. This motivates using
different, adaptive $\alpha$ values for each attention head, such
that some heads may learn to be sparser, and others may become closer
to softmax. We propose doing so by treating the $\alpha$ values as
neural network parameters, optimized via stochastic gradients along
with the other weights.

\paragraph{Derivatives \wrt~{\boldmath $\alpha$}.}
In order to optimize $\alpha$ automatically via gradient methods, we
must compute the Jacobian of the \entmaxtext output \wrt $\alpha$.
Since \entmaxtext is defined through an optimization problem, this is
non-trivial and cannot be simply handled through automatic
differentiation; it falls within the domain of \emph{argmin
differentiation}, an active research topic in optimization
\citep{gould,optnet}.

One of our key contributions is the derivation of a closed-form
expression for this Jacobian. The next proposition provides
such an expression, enabling \entmaxtext layers with
adaptive $\alpha$. To the best of our knowledge, ours is the first
neural network module that can automatically, continuously vary in
shape away from softmax and toward sparse mappings like sparsemax.

\begin{proposition}\label{prop:grad_alpha}%
Let $\p^\star \coloneqq \aentmax(\x)$ be the solution of
\eqnref{define_entmax}.
Denote the distribution $\tilde{\pp}_i \coloneqq \nicefrac{(\pp_i^\star)^{2 - \alpha}}{
\sum_j(\pp_j^\star)^{2-\alpha}}$ and let
$h_i \coloneqq -\pp^\star_i \log \pp^\star_i$.
The $i$\textsuperscript{th} component of the Jacobian
$\bm{g} \coloneqq \pfrac{\aentmax(\x)}{\alpha}$ is
\begin{equation}\label{eq:final_gradient_alpha_supp}
g_i =%
\begin{cases}
    \frac{p_i^{\star} - \tilde{\pp}_i}{(\alpha-1)^2} +
\frac{h_i - \tilde{\pp}_i%
\sum_j h_j%
}{\alpha-1}, & \alpha > 1,\\[1.5ex]
    \frac{h_i \log \pp_i^{\star} - \pp_i^{\star} \sum_j h_j \log p_j^{\star}}{2}, & \alpha = 1.
\end{cases}
\end{equation}
\end{proposition}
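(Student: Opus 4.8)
The plan is to differentiate the closed form \eqnref{entmax_form} implicitly, exploiting the fact that the Lagrange multiplier $\thresh$ is itself a differentiable function of $\alpha$. Write $\beta \coloneqq \alpha - 1$ and restrict attention to the support $\mathcal{S} \coloneqq \supp(\p^\star)$; for $i \in \mathcal{S}$ the ReLU in \eqnref{entmax_form} is inactive, so $\p^\star$ obeys the smooth identity $(\pp_i^\star)^{\beta} = \beta \xx_i - \thresh$. For $i \notin \mathcal{S}$ we have $\pp_i^\star = 0$, and — since for a fixed $\x$ the support is generically locally constant in $\alpha$ — also $g_i = 0$; this is consistent with \eqnref{final_gradient_alpha_supp}, where $h_i = 0$ and, under the convention $\tilde\pp_i = 0$ whenever $\pp_i^\star = 0$, every term vanishes. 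It therefore remains to treat $i \in \mathcal{S}$ with $\alpha > 1$.

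First I would differentiate $(\pp_i^\star)^{\beta} = \beta\xx_i - \thresh$ with respect to $\alpha$ (note $\mathrm{d}\beta/\mathrm{d}\alpha = 1$), using $\tfrac{\mathrm{d}}{\mathrm{d}\alpha}(\pp_i^\star)^\beta = (\pp_i^\star)^\beta \log \pp_i^\star + \beta (\pp_i^\star)^{\beta-1} g_i$, to get $(\pp_i^\star)^\beta\log\pp_i^\star + \beta(\pp_i^\star)^{\beta-1}g_i = \xx_i - \thresh'$, where $\thresh' \coloneqq \mathrm{d}\thresh/\mathrm{d}\alpha$. Solving for $g_i$ and eliminating $\xx_i$ via $\xx_i = \beta^{-1}\big((\pp_i^\star)^\beta + \thresh\big)$ (the same identity rearranged), then using $-\pp_i^\star\log\pp_i^\star = h_i$ and $(\pp_i^\star)^{1-\beta} = (\pp_i^\star)^{2-\alpha}$, yields
\begin{equation*}
g_i = \frac{\pp_i^\star}{\beta^2} + \frac{\thresh\,(\pp_i^\star)^{2-\alpha}}{\beta^2} - \frac{\thresh'\,(\pp_i^\star)^{2-\alpha}}{\beta} + \frac{h_i}{\beta},
\end{equation*}
which still involves the unknowns $\thresh$ and $\thresh'$.

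To eliminate them I would differentiate the normalization constraint $\sum_{i} \pp_i^\star = 1$ to obtain $\sum_i g_i = 0$, sum the previous display over $i \in \mathcal{S}$, and solve for $\thresh'$ in terms of $\thresh$, $S \coloneqq \sum_j (\pp_j^\star)^{2-\alpha}$, and $\sum_j h_j$. Substituting this back into the display for $g_i$, the two terms proportional to $\thresh$ cancel exactly; dividing through by $S$ to form $\tilde\pp_i = (\pp_i^\star)^{2-\alpha}/S$, what remains is precisely $g_i = \tfrac{\pp_i^\star - \tilde\pp_i}{(\alpha-1)^2} + \tfrac{h_i - \tilde\pp_i\sum_j h_j}{\alpha - 1}$, the claimed formula for $\alpha > 1$.

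Finally, the $\alpha = 1$ case follows by continuity: $\aentmax$ and its $\alpha$-derivative extend continuously to $\alpha = 1$ (where $\p^\star = \softmax(\x)$), so it suffices to evaluate $\lim_{\beta \to 0^+} g_i$ in the formula just derived. This is a $0/0$ limit — both $\pp_i^\star - \tilde\pp_i$ and $h_i - \tilde\pp_i\sum_j h_j$ vanish as $\beta\to 0$, since $\tilde\pp_i \to \pp_i^\star$ — so the delicate step is a second-order Taylor expansion of $(\pp_i^\star)^{2-\alpha} = \pp_i^\star e^{-\beta\log\pp_i^\star}$ and of $S$ in $\beta$; after substituting these, the singular $\beta^{-1}$ contributions cancel and the finite remainder is $g_i = \tfrac12\big(h_i\log\pp_i^\star - \pp_i^\star\sum_j h_j\log\pp_j^\star\big)$. (Equivalently one can sidestep the limit by differentiating $\softmax$ directly, tracking its normalizer.) The main obstacles are essentially bookkeeping: justifying that $\thresh$ is differentiable in $\alpha$ with generically locally constant support, and carrying the $\beta$-expansion to exactly the order needed for the cancellation in the $\alpha = 1$ case.
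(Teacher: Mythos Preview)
Your proposal is correct and follows essentially the same route as the paper: implicit differentiation of the closed form \eqnref{entmax_form} together with the normalization constraint to eliminate the threshold derivative, followed by a limiting argument for $\alpha=1$. The only cosmetic differences are that you differentiate the implicit relation $(\pp_i^\star)^{\alpha-1}=(\alpha-1)\xx_i-\thresh$ rather than the explicit power expression the paper uses (which makes the $\thresh$-cancellation a bit cleaner), and you resolve the $\alpha\to 1$ indeterminacy by Taylor expansion rather than by two applications of L'H\^opital's rule---these are equivalent computations.
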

\noindent%
The proof uses implicit function differentiation and is given in Appendix \ref{sec:alpha_grad}.

Proposition~\ref{prop:grad_alpha} provides the remaining missing
piece needed for training adaptively sparse Transformers. In the
following section, we evaluate this strategy on neural machine
translation, and analyze the behavior of the learned attention heads.

\section{Experiments}
We apply our adaptively sparse Transformers on four machine translation tasks.
For comparison, a natural baseline is the standard Transformer
architecture using the softmax transform in its multi-head attention mechanisms.
We consider two other model variants in our experiments that make use of different
normalizing transformations:

\begin{itemize}
\item \textbf{1.5-\entmaxtext:} a Transformer with sparse \entmaxtext
attention with fixed $\alpha=1.5$ for all heads. This is a novel model,
since 1.5-\entmaxtext{} had only been proposed for
RNN-based NMT models~\citep{entmax}, but never
in Transformers, where attention modules are not just one single
component of the seq2seq model but rather an integral part of all of
the model components.%
\item \textbf{\boldmath $\alpha$-\entmaxtext:} an \textbf{adaptive}
Transformer with sparse \entmaxtext attention with a different,
learned $\alpha_{i,j}^t$ for each head.
\end{itemize}

The adaptive model has an additional scalar parameter per attention head per
layer for each of the three attention mechanisms (encoder self-attention,
context attention, and decoder self-attention), \ie,
\begin{equation}
\begin{aligned}%
\hspace{-8pt}
\big \{ a_{i,j}^{t} \in \reals:~&
i \in \{1, \dots, L\},
j \in \{1, \dots, H\}, \\
& t \in \{\texttt{enc}, \texttt{ctx}, \texttt{dec}\} \big\},
\end{aligned}
\hspace{-5pt}
\end{equation}
and we set $\alpha_{i,j}^t = 1 + \sigmoid(a_{i,j}^t) \in ]1, 2[$.
All or some of the $\alpha$ values can be tied if desired, but we
keep them independent for analysis purposes.

\begin{table*}[ht]

    \begin{center}
    \small
    \begin{tabular}{lrrrr}
    \toprule
    activation
    & \langp{de}{en} & \langp{ja}{en}
    & \langp{ro}{en} & \langp{en}{de}\\
    \midrule
    $\softmax$
    & 29.79
    & 21.57
    & 32.70
    & 26.02 \\
    $\aentmax[1.5]$
    & 29.83
    & \textbf{22.13}
    & \textbf{33.10}
    & 25.89 \\
    $\aentmax[\alpha]$
    & \textbf{29.90}
    & 21.74
    & 32.89
    & \textbf{26.93} \\
    \bottomrule
    \end{tabular}
    \end{center}
    \caption{Machine translation tokenized BLEU test results
    on IWSLT 2017 \langp{de}{en},
    KFTT \langp{ja}{en}, WMT 2016 \langp{ro}{en} and
    WMT 2014 \langp{en}{de}, respectively.\label{table:mt}}
    \end{table*}

\paragraph{Datasets.} Our models were trained on 4 machine
translation datasets of different training sizes:

\begin{itemize}[itemsep=.5ex,leftmargin=2ex]
    \item IWSLT 2017 German $\rightarrow$ English
    \citep[\langp{de}{en},][]{cettolooverview}: ~200K sentence pairs.
    \item KFTT Japanese $\rightarrow$ English
    \citep[\langp{ja}{en},][]{neubig11kftt}: ~300K sentence pairs.
    \item WMT 2016 Romanian $\rightarrow$ English
    \citep[\langp{ro}{en},][]{bojar2016findings}: ~600K sentence pairs.
    \item WMT 2014 English $\rightarrow$ German
    \citep[\langp{en}{de},][]{bojar2014findings}: ~4.5M sentence pairs.
\end{itemize}

All of these datasets were preprocessed with byte-pair
encoding~\citep[BPE;][]{sennrich2016neural}, using joint
segmentations of 32k merge operations.

\paragraph{Training.}
We follow the dimensions of the Transformer-Base model of
\citet{vaswani2017attention}: The number of layers is $L=6$ and
number of heads is $H=8$ in the encoder self-attention, the context
attention, and the decoder self-attention. We use a mini-batch size
of 8192 tokens and warm up the learning rate linearly until 20k
steps, after which it decays according to an inverse square root
schedule. All models were trained until convergence of validation
accuracy, and evaluation was done at each 10k steps for
\langp{ro}{en} and \langp{en}{de} and at each 5k steps for
\langp{de}{en} and \langp{ja}{en}. The end-to-end computational
overhead of our methods, when compared to standard softmax, is
relatively small; in training tokens per second, the models using
$\alpha$-\entmaxtext and $1.5$-\entmaxtext are, respectively, $75\%$
and $90\%$ the speed of the softmax model.

\paragraph{Results.}
We report test set tokenized BLEU~\citep{papineni2002bleu} results in
Table \ref{table:mt}. We can see that replacing softmax by
\entmaxtext{} does not hurt performance in any of the datasets;
indeed, sparse attention Transformers tend to have slightly higher
BLEU, but their sparsity leads to a better potential for analysis. In
the next section, we make use of this potential by exploring the
learned internal mechanics of the self-attention heads.

\section{Analysis}

We conduct an analysis for the higher-resource dataset WMT 2014
English $\rightarrow$ German of the attention in the sparse adaptive
Transformer model ($\alpha$-\entmaxtext) at multiple levels: we
analyze high-level statistics as well as individual head behavior.
Moreover, we make a qualitative analysis of the interpretability
capabilities of our models.

\subsection{High-Level Statistics}
\label{sec:stats}

\paragraph{What kind of {\boldmath $\alpha$} values are learned?}
\figref{learning_alpha} shows the learning trajectories of the
$\alpha$ parameters of a selected subset of heads. We generally
observe a tendency for the randomly-initialized $\alpha$ parameters
to decrease initially, suggesting that softmax-like behavior may be
preferable while the model is still very uncertain. After around one
thousand steps, some heads change direction and become sparser,
perhaps as they become more confident and specialized. This shows
that the initialization of $\alpha$ does not predetermine its
sparsity level or the role the head will have throughout. In
particular, head $8$ in the encoder self-attention layer $2$ first
drops to around $\alpha=1.3$ before becoming one of the sparsest
heads, with $\alpha\approx2$.

The overall distribution of $\alpha$ values at convergence can be
seen in \figref{hist_alphas}. We can observe that the encoder
self-attention blocks learn to concentrate the $\alpha$ values in two
modes: a very sparse one around $\alpha \rightarrow 2$, and a dense
one between softmax and 1.5-\entmaxtext{}. However, the decoder self
and context attention only learn to distribute these parameters in a
single mode. We show next that this is reflected in the average
density of attention weight vectors as well.

\begin{figure}[t]
    \includegraphics[width=\columnwidth]{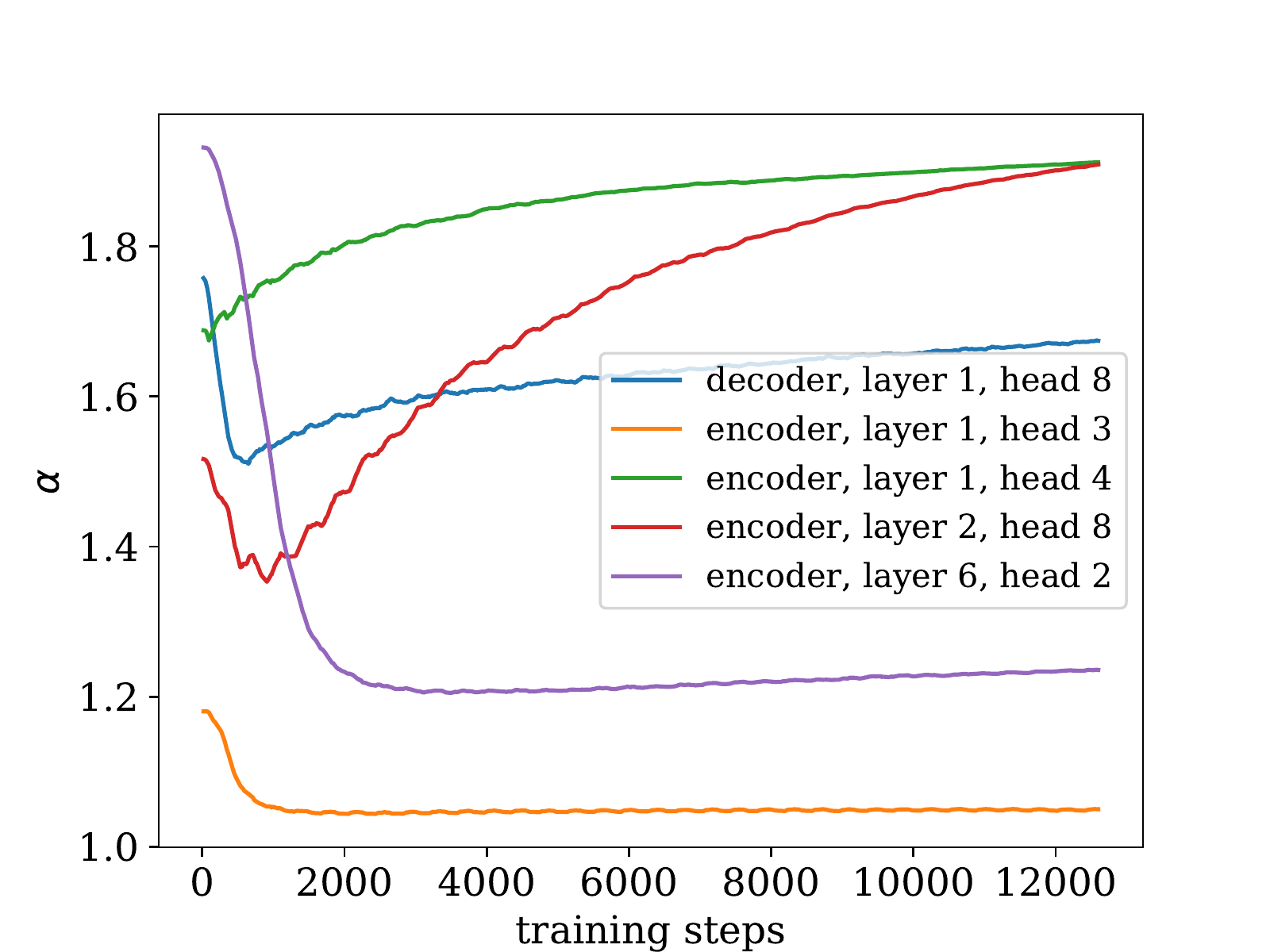}
    \caption{\label{fig:learning_alpha}
    Trajectories of $\alpha$ values for a subset of the heads during
    training. Initialized at random, most heads become denser in the
    beginning, before converging. This suggests that dense attention may
    be more beneficial while the network is still uncertain, being
    replaced by sparse attention afterwards.}
\end{figure}

\begin{figure}[t]
    \includegraphics[width=\columnwidth]{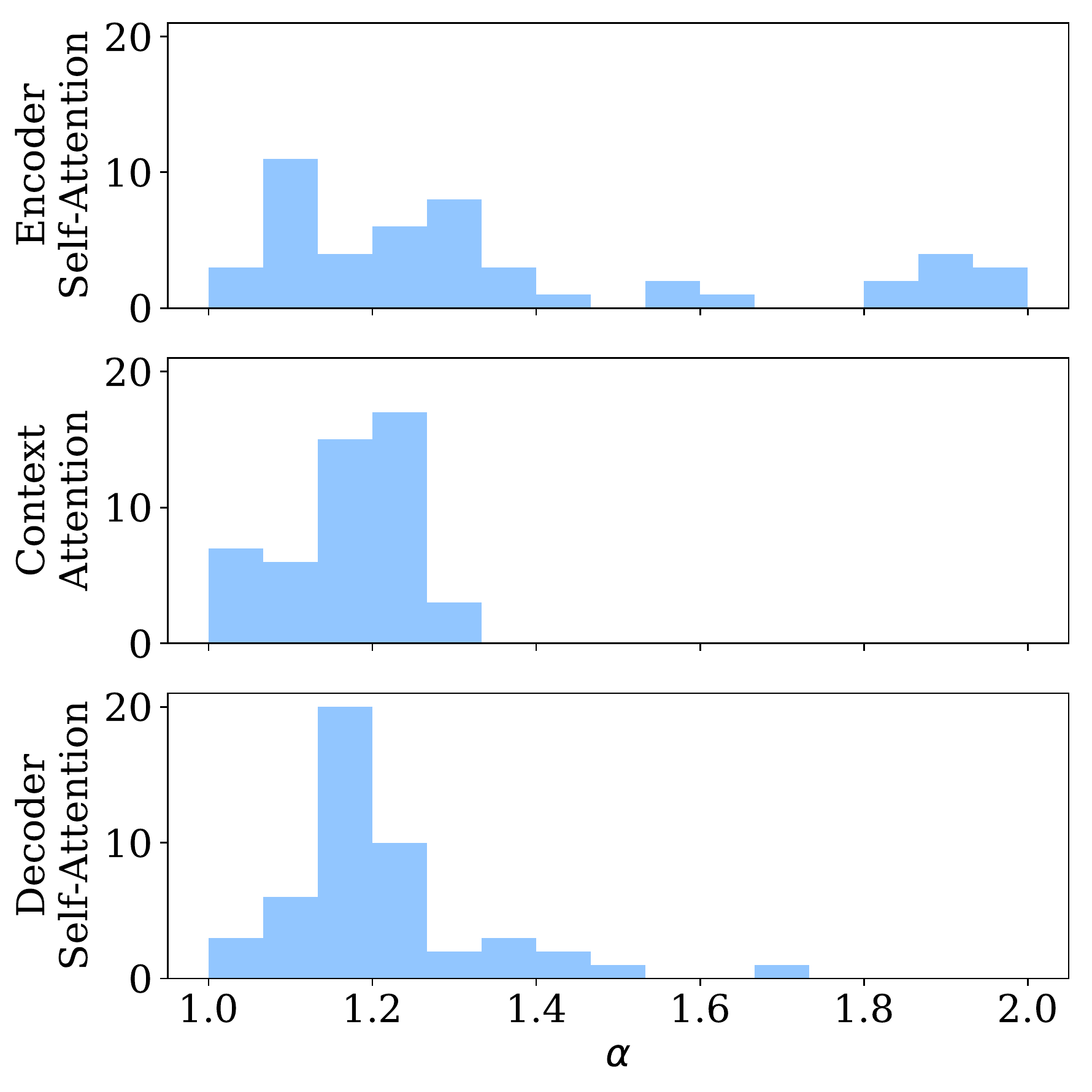}
    \caption{%
    \label{fig:hist_alphas}%
    Distribution of learned $\alpha$ values per attention block.
    While the encoder self-attention has a bimodal distribution
    of values of $\alpha$,
    the decoder self-attention and context attention have a single mode.}
\end{figure}

\paragraph{Attention weight density when translating.}
For any $\alpha>1$, it would still be possible for the weight
matrices in Equation~\ref{eq:head} to learn re-scalings so as to make
attention sparser or denser. To visualize the impact of adaptive
$\alpha$ values, we compare the empirical attention weight density
(the average number of tokens receiving non-zero attention) within
each module, against sparse Transformers with fixed $\alpha=1.5$.

\figref{hist_densities} shows that, with fixed $\alpha=1.5$, heads
tend to be sparse and similarly-distributed in all three attention
modules. With learned $\alpha$, there are two notable changes: (i) a
prominent mode corresponding to fully dense probabilities, showing
that our models learn to combine sparse and dense attention, and (ii)
a distinction between the encoder self-attention -- whose background
distribution tends toward extreme sparsity -- and the other two
modules, who exhibit more uniform background distributions. This
suggests that perhaps entirely sparse Transformers are suboptimal.

The fact that the decoder seems to prefer denser attention
distributions might be attributed to it being auto-regressive, only
having access to past tokens and not the full sentence. We speculate
that it might lose too much information if it assigned weights of
zero to too many tokens in the self-attention, since there are fewer
tokens to attend to in the first place.

Teasing this down into separate layers,
\figref{head_density_per_layer} shows the average (sorted) density of
each head for each layer. We observe that $\alpha$-\entmaxtext{} is
able to learn different sparsity patterns at each layer, leading to
more variance in individual head behavior, to clearly-identified
dense and sparse heads, and overall to different tendencies compared
to the fixed case of $\alpha=1.5$.

\begin{figure}[t]
    \centering
        \includegraphics[width=.95\columnwidth]{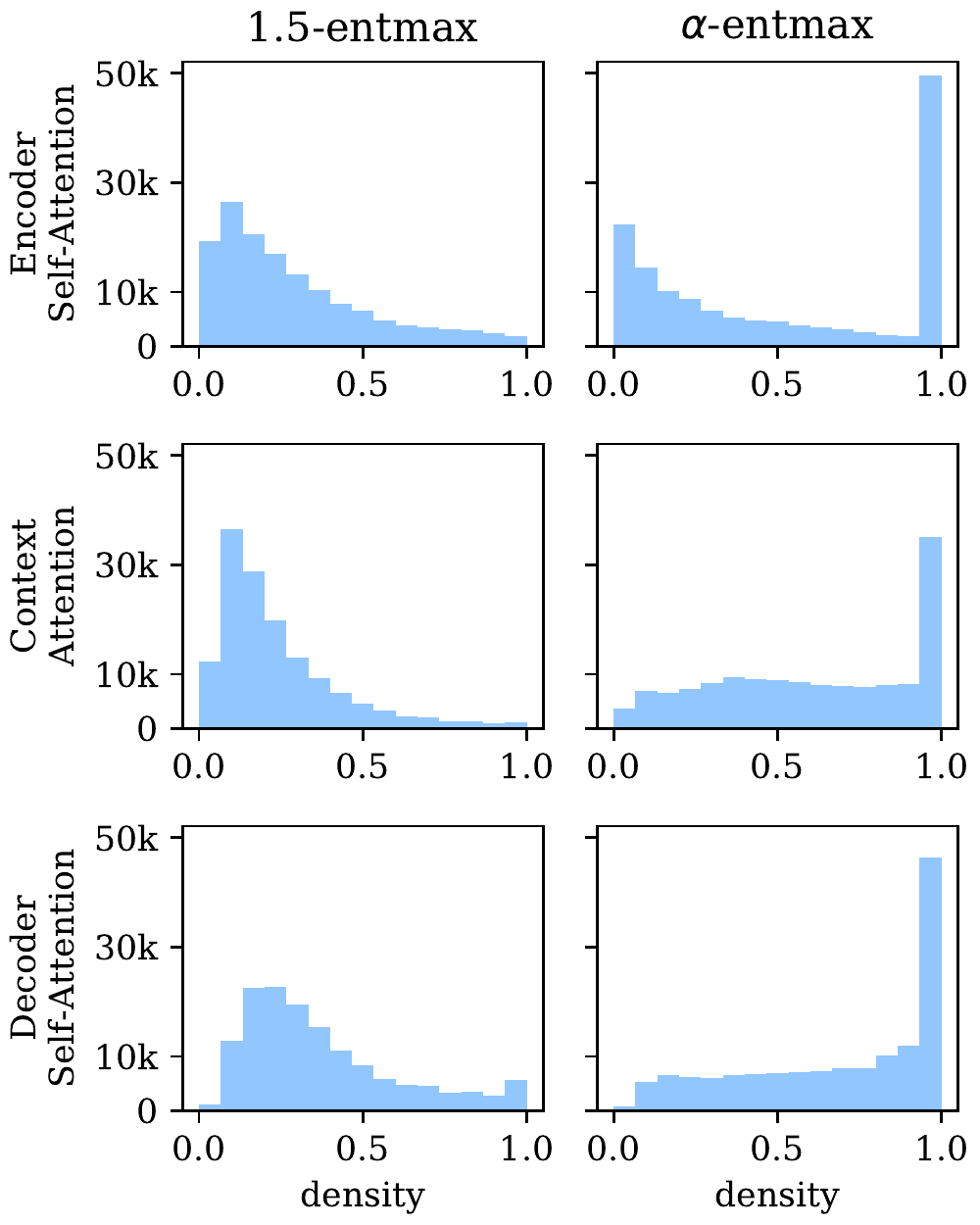}
        \caption{%
    \label{fig:hist_densities}
    Distribution of attention densities (average number of tokens
    receiving non-zero attention weight) for all attention heads and all
    validation sentences.
    When compared to 1.5-\entmaxtext{}, $\alpha$-\entmaxtext{}
    distributes the sparsity in a more uniform manner, with a clear mode
    at fully dense attentions, corresponding to the heads with low
    $\alpha$. In the softmax case, this distribution would lead to a
    single bar with density 1.}
\end{figure}

\begin{figure}[t]
    \centering
        \includegraphics[
            width=.95\columnwidth]{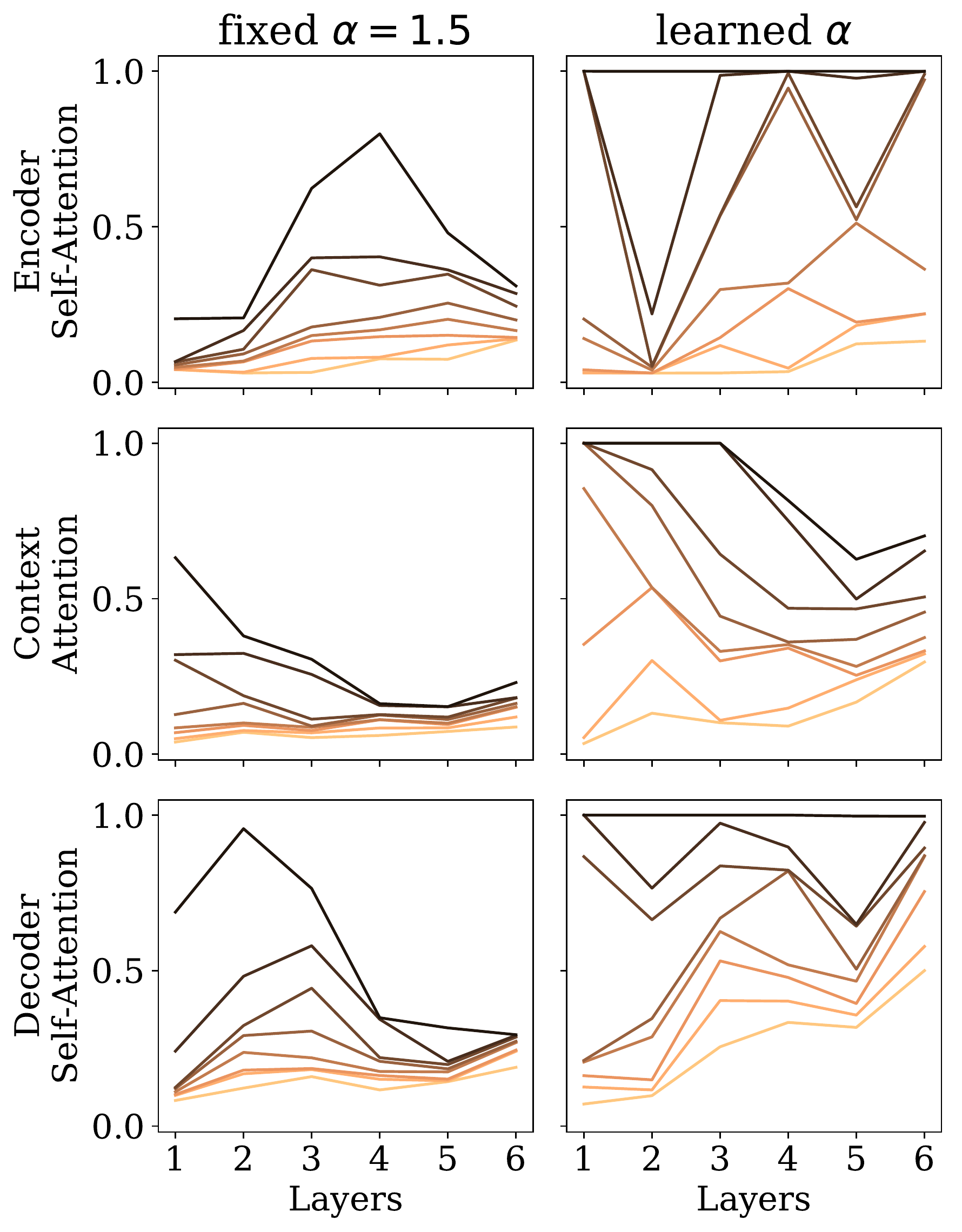}
        \caption{%
    \label{fig:head_density_per_layer}
    Head density per layer for fixed and learned $\alpha$. Each line
    corresponds to an attention head; lower values mean that that
    attention head is sparser. Learned $\alpha$ has higher variance.
    }
\end{figure}

\paragraph{Head diversity.}
To measure the overall disagreement between attention heads, as a measure of
head diversity, we use the following generalization of the Jensen-Shannon
divergence:

\begin{equation}
JS = \HHs\left(\frac{1}{H}\sum_{j=1}^H \bm{p}_{j}\right) - \frac{1}{H}\sum_{j=1}^{H}
\HHs(\bm{p}_j)
\end{equation}

where $\bm{p}_j$ is the vector of attention weights assigned by head
$j$ to each word in the sequence, and $\HHs$ is the Shannon entropy,
base-adjusted based on the dimension of $\bm{p}$ such that $JS \leq
1$. We average this measure over the entire validation set. The
higher this metric is, the more the heads are taking different roles
in the model.

\figref{js_divs}  shows that both sparse Transformer variants show more
diversity than the traditional softmax one. Interestingly, diversity seems to
peak in the middle layers of the encoder self-attention and context attention,
while this is not the case for the decoder self-attention.

The statistics shown in this section can be found for the other language pairs in
\appref{plots_lps}.

\begin{figure}[h]
    \includegraphics[width=\columnwidth]{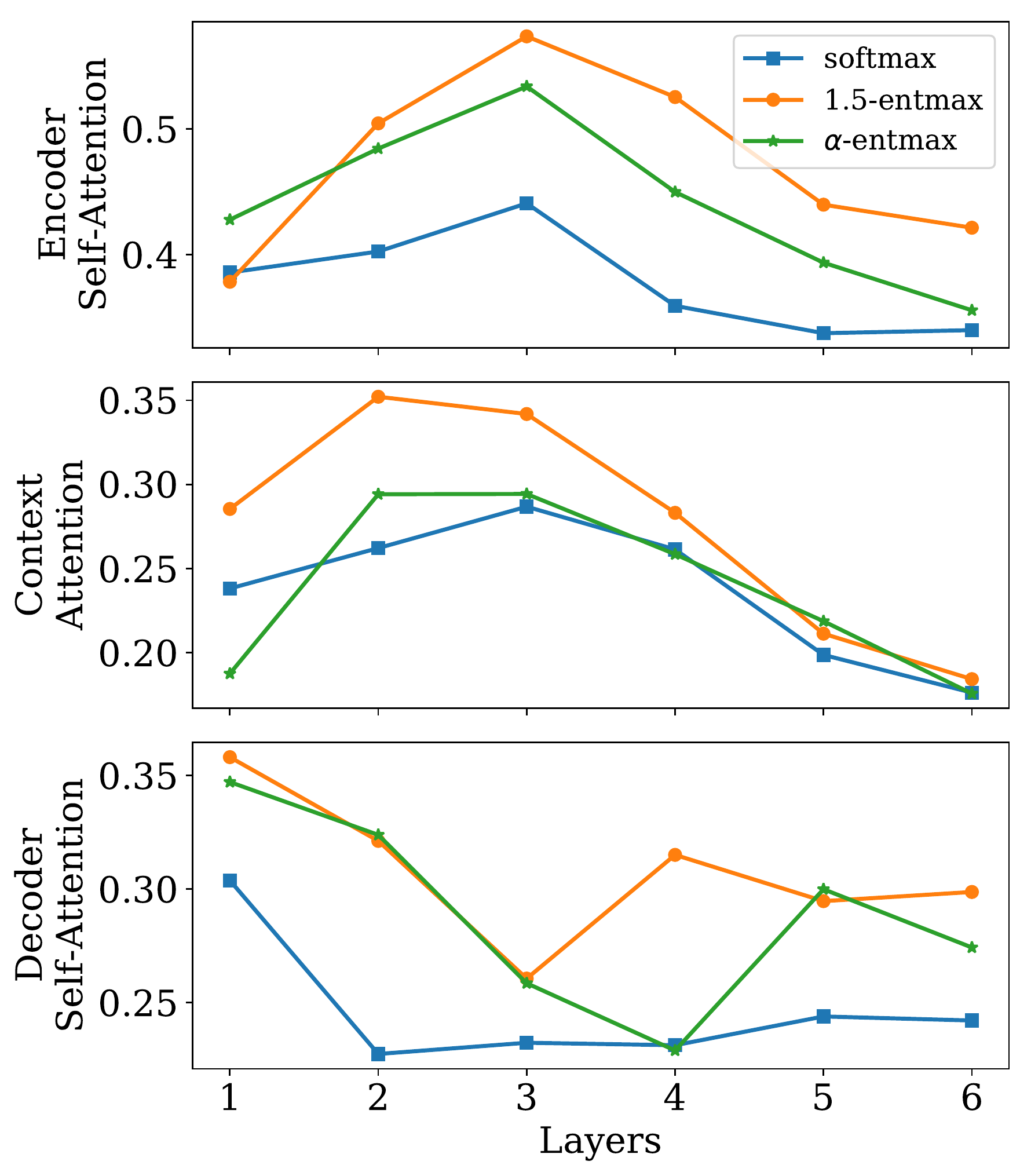}
    \caption{%
\label{fig:js_divs}
Jensen-Shannon Divergence between heads at each layer. Measures the
disagreement between heads: the higher the value, the more the heads
are disagreeing with each other in terms of where to attend. Models
using sparse \entmaxtext have more diverse attention than the softmax
baseline.
}
\end{figure}

\subsection{Identifying Head Specializations}

Previous work pointed out some specific roles played by
different heads in the softmax Transformer
model~\citep{voita2018context,tang2018why,specialized}. Identifying
the specialization of a head can be done by observing the
type of tokens or sequences that the head often assigns most
of its attention weight; this is facilitated by sparsity.

\paragraph{Positional heads.}
One particular type of head, as noted by \citet{specialized}, is the
positional head. These heads tend to focus their attention on either
the previous or next token in the sequence, thus obtaining
representations of the neighborhood of the current time step. In
\figref{head_prev}, we show attention plots for such heads, found for
each of the studied models. The sparsity of our models allows these
heads to be more confident in their representations, by assigning the
whole probability distribution to a single token in the sequence.
Concretely, we may measure a positional head's \textbf{confidence} as
the average attention weight assigned to the previous token. The
softmax model has three heads for position $-1$, with median
confidence $93.5\%$. The $1.5$-\entmaxtext model also has three heads
for this position, with median confidence $94.4\%$. The adaptive
model has four heads, with median confidences $95.9\%$, the
lowest-confidence head being dense with $\alpha=1.18$, while the
highest-confidence head being sparse ($\alpha=1.91$).

For position $+1$, the models each dedicate one head, with confidence
around $95\%$, slightly higher for \entmaxtext. The adaptive model
sets $\alpha=1.96$ for this head.

\begin{figure}[h]
    \includegraphics[width=\columnwidth]{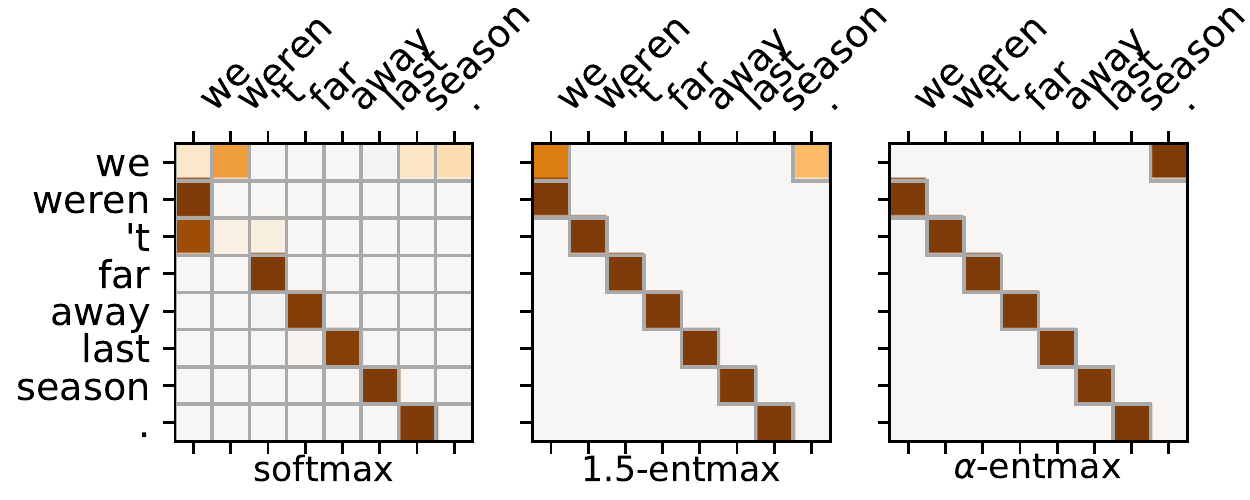}
\caption{
Self-attention from the most confidently previous-position head in
each model. The learned parameter in the $\alpha$-\entmaxtext model
is $\alpha=1.91$. Quantitatively more confident, visual inspection
confirms that the adaptive head behaves more consistently.}
\label{fig:head_prev}
\end{figure}

\paragraph{BPE-merging head.}
Due to the sparsity of our models, we are able to identify other head
specializations, easily identifying which heads should be further analysed.
In \figref{head_bpe} we show one such head where the $\alpha$ value
is particularly high (in the encoder, layer 1, head 4 depicted in
\figref{learning_alpha}). We found that this head most often looks at
the current time step with high confidence, making it a positional head
with offset $0$. However, this head often spreads weight sparsely
over 2-3 neighboring tokens, when the tokens are part of the same BPE
cluster\footnote{BPE-segmented words are denoted by $\sim$ in the
figures.} or hyphenated words. As this head is in the first layer, it
provides a useful service to the higher layers by combining
information evenly within some BPE clusters.

For each BPE cluster or cluster of hyphenated words,
we computed a score between 0 and 1 that corresponds to the
maximum attention mass assigned by any token to the rest of the
tokens inside the cluster in order to quantify the BPE-merging
capabilities of these heads.\footnote{If the
cluster has size 1, the score is the weight the token assigns to
itself.} There are not any attention heads in
the softmax model that are able to obtain a score over $80\%$, while
for $1.5$-\entmaxtext and $\alpha$-\entmaxtext there are two heads
in each ($83.3\%$ and $85.6\%$ for $1.5$-\entmaxtext and $88.5\%$ and
$89.8\%$ for $\alpha$-\entmaxtext).

\begin{figure}[t]
    \centering
    \includegraphics[width=\columnwidth]{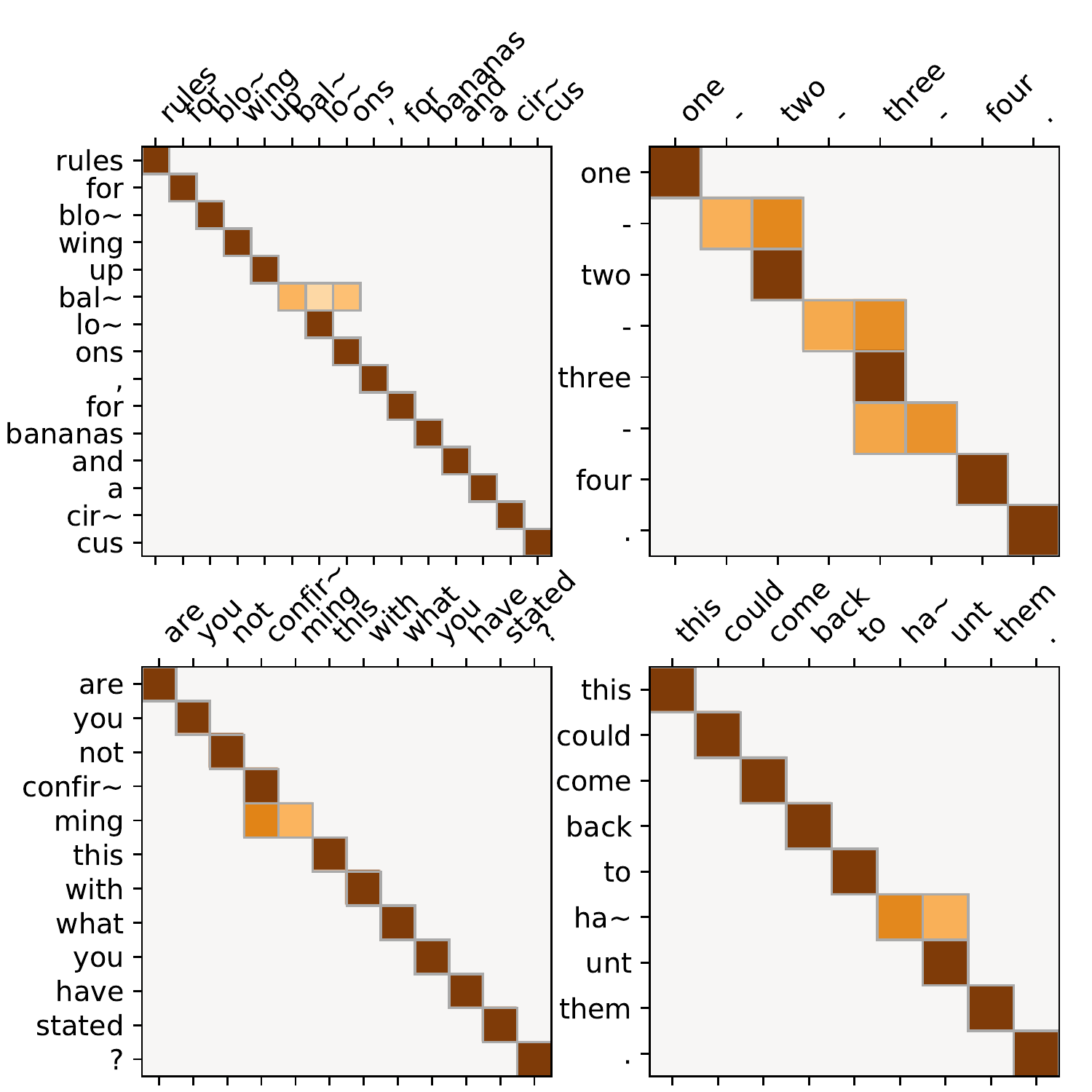}
    \caption{
    BPE-merging head $(\alpha=1.91)$ discovered in the
    $\alpha$-\entmaxtext model. Found in the first encoder layer,
    this head learns to discover some subword units and combine their
    information, leaving most words intact. It places $99.09\%$ of
    its probability mass within the same BPE cluster as the current
    token: more than any head in any other model.}
    \label{fig:head_bpe}
\end{figure}

\begin{figure}[t]
    \begin{subfigure}[b]{\columnwidth}
    \includegraphics[width=\columnwidth]{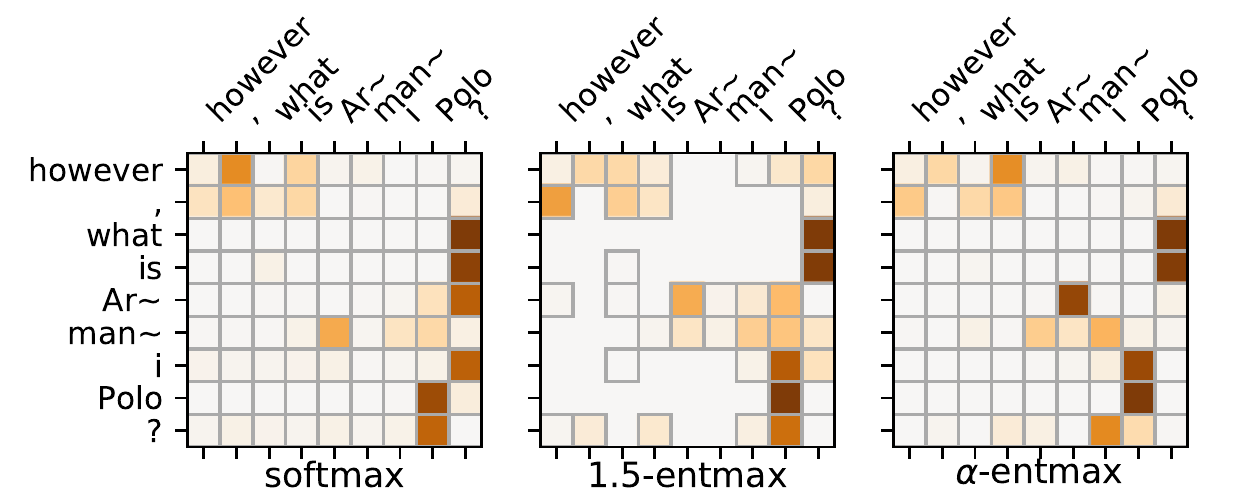}
    \end{subfigure}
    \begin{subfigure}[b]{\columnwidth}
        \includegraphics[width=\columnwidth]{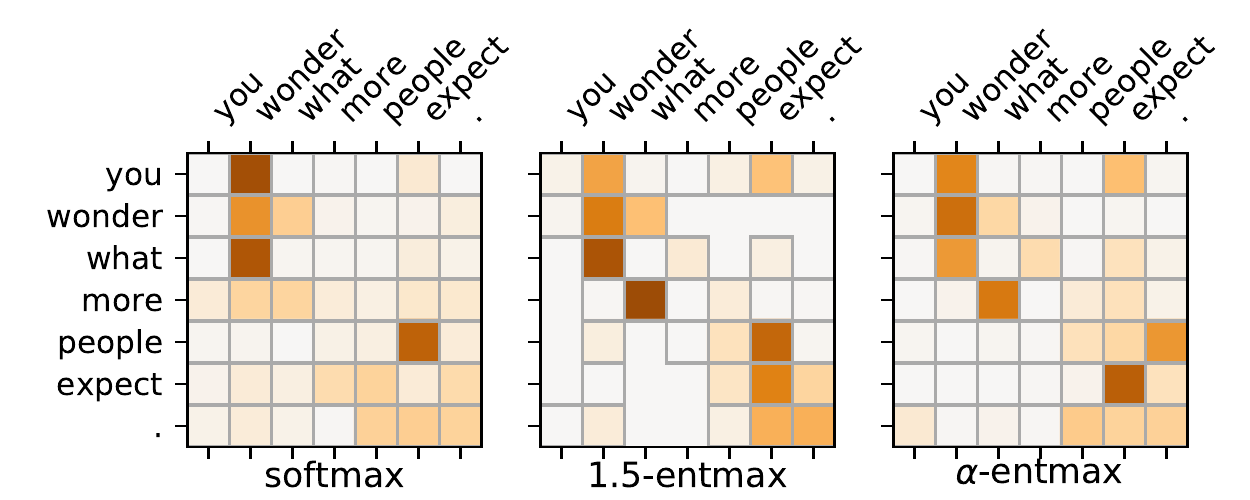}
    \end{subfigure}
\caption{
Interrogation-detecting heads in the three models. The top sentence
is interrogative while the bottom one is declarative but includes the
interrogative word ``what''. In the top example, these {\it
interrogation heads} assign a high probability to the question mark
in the time step of the interrogative word (with $\geq 97.0\%$
probability), while in the bottom example since there is no question
mark, the same head does not assign a high probability to the last
token in the sentence during the interrogative word time step.
Surprisingly, this head prefers a low $\alpha=1.05$, as can be seen
from the dense weights. This allows the head to identify the noun
phrase ``Armani Polo" better.}
\label{fig:head_interro}
\end{figure}

\begin{figure}[t]
    \includegraphics[
        width=\columnwidth]{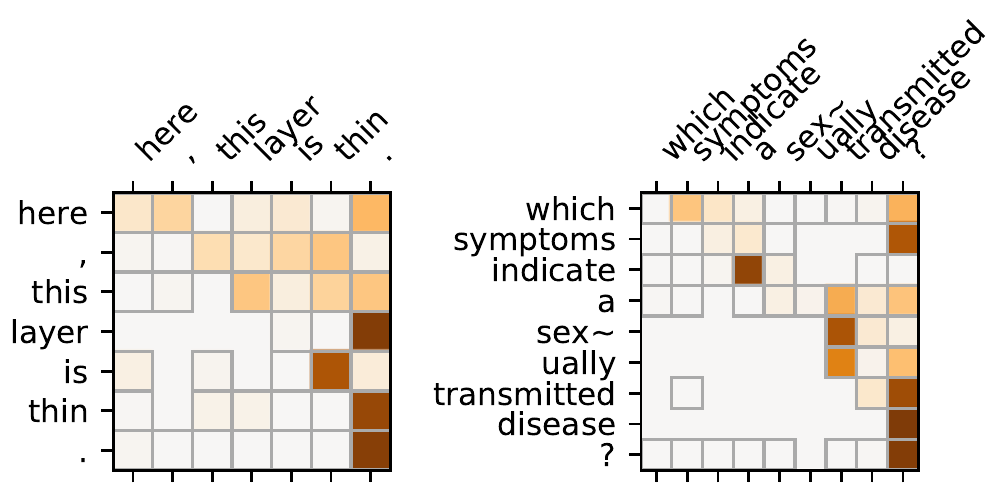}
\caption{
Example of two sentences of similar length where the same head
($\alpha=1.33$) exhibits different sparsity. The longer phrase in the
example on the right ``a sexually transmitted disease'' is handled
with higher confidence, leading to more sparsity.
}
\label{fig:sparsity_difference}
\end{figure}

\paragraph{Interrogation head.}
On the other hand, in \figref{head_interro} we show a head for which our
adaptively sparse model chose an $\alpha$ close to 1, making it
closer to softmax (also shown in {\it encoder, layer 1, head 3}
depicted in \figref{learning_alpha}). We observe that this head
assigns a high probability to question marks at the end of the
sentence in time steps where the current token is interrogative, thus
making it an interrogation-detecting head. We also observe this type
of heads in the other models, which we also depict in
\figref{head_interro}. The average attention weight placed on the
question mark when the current token is an interrogative word is
$98.5\%$ for softmax, $97.0\%$ for $1.5$-\entmaxtext, and $99.5\%$
for $\alpha$-\entmaxtext.

Furthermore, we can examine sentences where some tendentially sparse
heads become less so, thus identifying sources of ambiguity where the
head is less confident in its prediction. An example is shown in
\figref{sparsity_difference} where sparsity in the same head differs
for sentences of similar length.

\section{Related Work}\label{sec:related}

\paragraph{Sparse attention.}
Prior work has developed sparse attention mechanisms, including
applications to NMT~\citep{sparsemax, malaviya2018sparse, fusedmax,
shao2019ssn, maruf2019selective}. \citet{entmax} introduced the
\entmaxtext function this work builds upon. In their work, there is a
single attention mechanism which is controlled by a fixed $\alpha$.
In contrast, this is the first work to allow such attention mappings
to \emph{dynamically} adapt their curvature and sparsity, by
automatically adjusting the continuous $\alpha$ parameter. We also
provide the first results using sparse attention in a Transformer
model.

\paragraph{Fixed sparsity patterns.}
Recent research improves the scalability of Transformer-like networks
through static, fixed sparsity patterns
\citep{openai_sparse_transf,dynamic_conv}. Our adaptively-sparse
Transformer can dynamically select a sparsity pattern that finds
relevant words regardless of their position (\eg,
\figref{head_interro}). Moreover, the two strategies could be
combined. In a concurrent line of research, \citet{Sukhbaatar2019}
propose an adaptive attention span for Transformer language models.
While their work has each head learn a different contiguous span of
context tokens to attend to, our work finds different sparsity
patterns in the same span. Interestingly, some of their findings
mirror ours -- we found that attention heads in the last layers tend
to be denser on average when compared to the ones in the first
layers, while their work has found that lower layers tend to have a
shorter attention span compared to higher layers.

\paragraph{Transformer interpretability.}
The original Transformer paper~\citep{vaswani2017attention} shows
attention visualizations, from which some speculation can be made of
the roles the several attention heads have.
\citet{marecek-rosa-2018-extracting} study the syntactic abilities of
the Transformer self-attention, while \citet{raganato2018analysis}
extract dependency relations from the attention weights.
\citet{bert-rediscovers} find that the self-attentions in
BERT~\citep{devlin2018bert} follow a sequence of processes that
resembles a classical NLP pipeline. Regarding redundancy of heads,
\citet{specialized} develop a method that is able to prune heads of
the multi-head attention module and make an empirical study of the
role that each head has in self-attention (positional, syntactic and
rare words). \citet{li2018multi} also aim to reduce head redundancy
by adding a regularization term to the loss that maximizes head
disagreement and obtain improved results. While not considering
Transformer attentions, \citet{jain2019attention} show that
traditional attention mechanisms do not necessarily improve
interpretability since softmax attention is vulnerable to an
adversarial attack leading to wildly different model predictions for
the same attention weights. Sparse attention may mitigate these
issues; however, our work focuses mostly on a more mechanical aspect
of interpretation by analyzing head behavior, rather than on
explanations for predictions.

\section{Conclusion and Future Work}
We contribute a novel strategy for adaptively sparse attention, and,
in particular, for adaptively sparse Transformers. We present the
first empirical analysis of Transformers with sparse attention
mappings (\ie, \entmaxtext), showing potential in both translation
accuracy as well as in model interpretability.

In particular, we analyzed how the attention heads in the proposed
adaptively sparse Transformer can specialize more and with higher
confidence. Our adaptivity strategy relies only on gradient-based
optimization, side-stepping costly per-head hyper-parameter searches.
Further speed-ups are possible by leveraging more parallelism in the
bisection algorithm for computing $\alpha$-\entmaxtext.

Finally, some of the automatically-learned behaviors of our
adaptively sparse Transformers -- for instance, the near-deterministic
positional heads or the subword joining head -- may provide new ideas
for designing static variations of the Transformer.

\section*{Acknowledgments}
This work was supported by the European Research Council (ERC StG
DeepSPIN 758969), and by the Funda\c{c}\~ao para a Ci\^encia e
Tecnologia through contracts UID/EEA/50008/2019 and
CMUPERI/TIC/0046/2014 (GoLocal). We are grateful to Ben Peters for
the $\alpha$-\entmaxtext code and Erick Fonseca, Marcos Treviso,
Pedro Martins, and Tsvetomila Mihaylova for insightful group
discussion. We thank Mathieu Blondel for the idea to learn $\alpha$.
We would also like to thank the anonymous reviewers for their helpful
feedback.

\bibliography{refs}
\bibliographystyle{acl_natbib}

\clearpage
\onecolumn
\appendix
\begin{center}
{\huge \textbf{Supplementary Material}}
\end{center}

\setlength{\parindent}{0pt}
\setlength{\parskip}{1.5ex plus 0.5ex minus .2ex}

\def\RR{{\mathbb{R}}}
\def\EE{{\mathbb{E}}}
\def\RRY{\RR^{|\cY|}}
\def\y{\bm{y}}
\def\triangleY{\triangle^{|\cY|}}
\def\sizeY{{|\cY|}}
\def\cC{{\mathcal{C}}}
\def\cD{{\mathcal{D}}}
\def\cX{{\mathcal{X}}}
\def\cY{{\mathcal{Y}}}

\section{High-Level Statistics Analysis of Other Language Pairs}
\label{sec:plots_lps}

\begin{figure}[h!]
    \centering
    \begin{subfigure}[b]{.47\linewidth}
    \includegraphics[width=\linewidth]{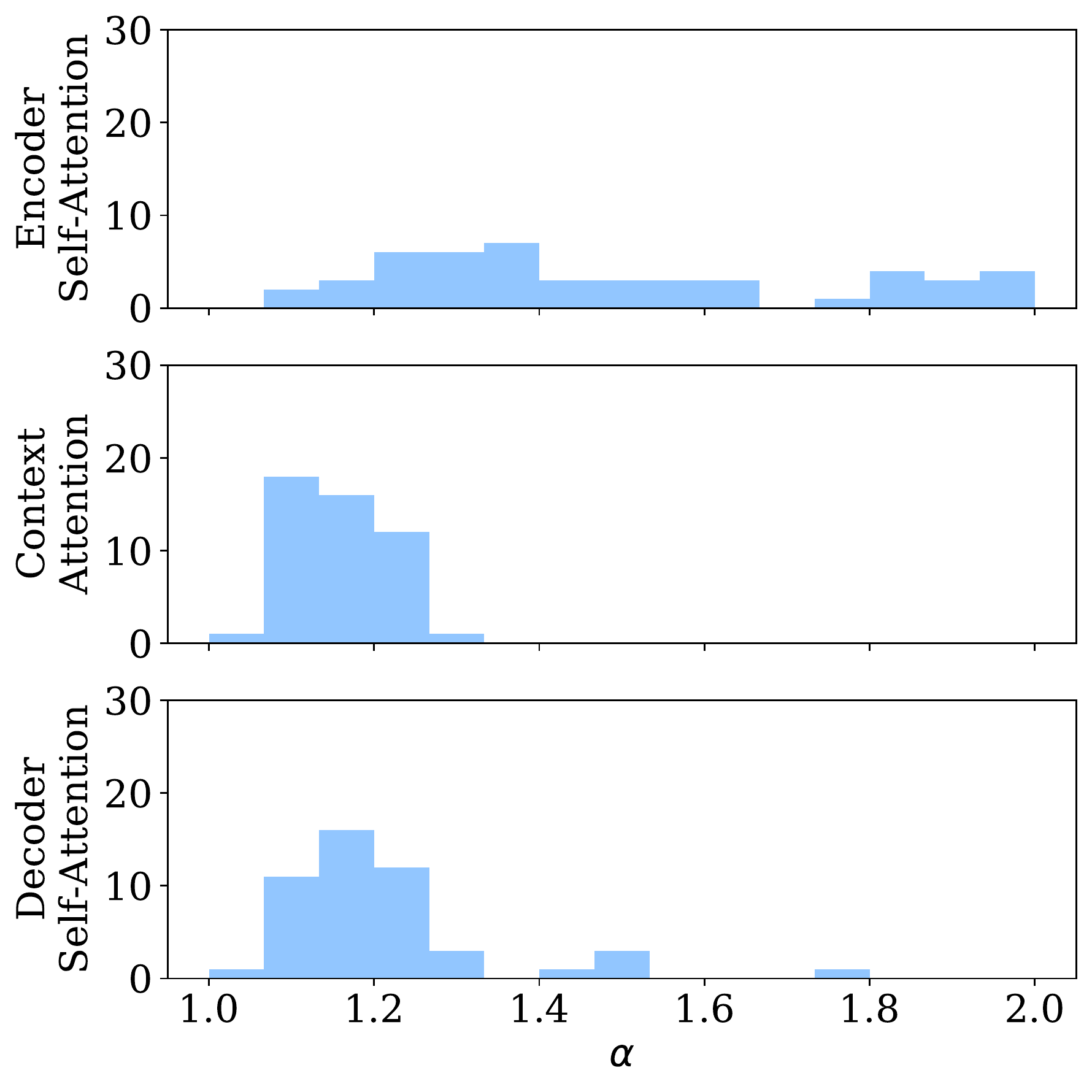}
    \caption{%
\label{fig:hist_alphas_ro}%
WMT 2016 \langp{ro}{en}.}
\end{subfigure}
\begin{subfigure}[b]{.47\linewidth}
    \includegraphics[width=\linewidth]{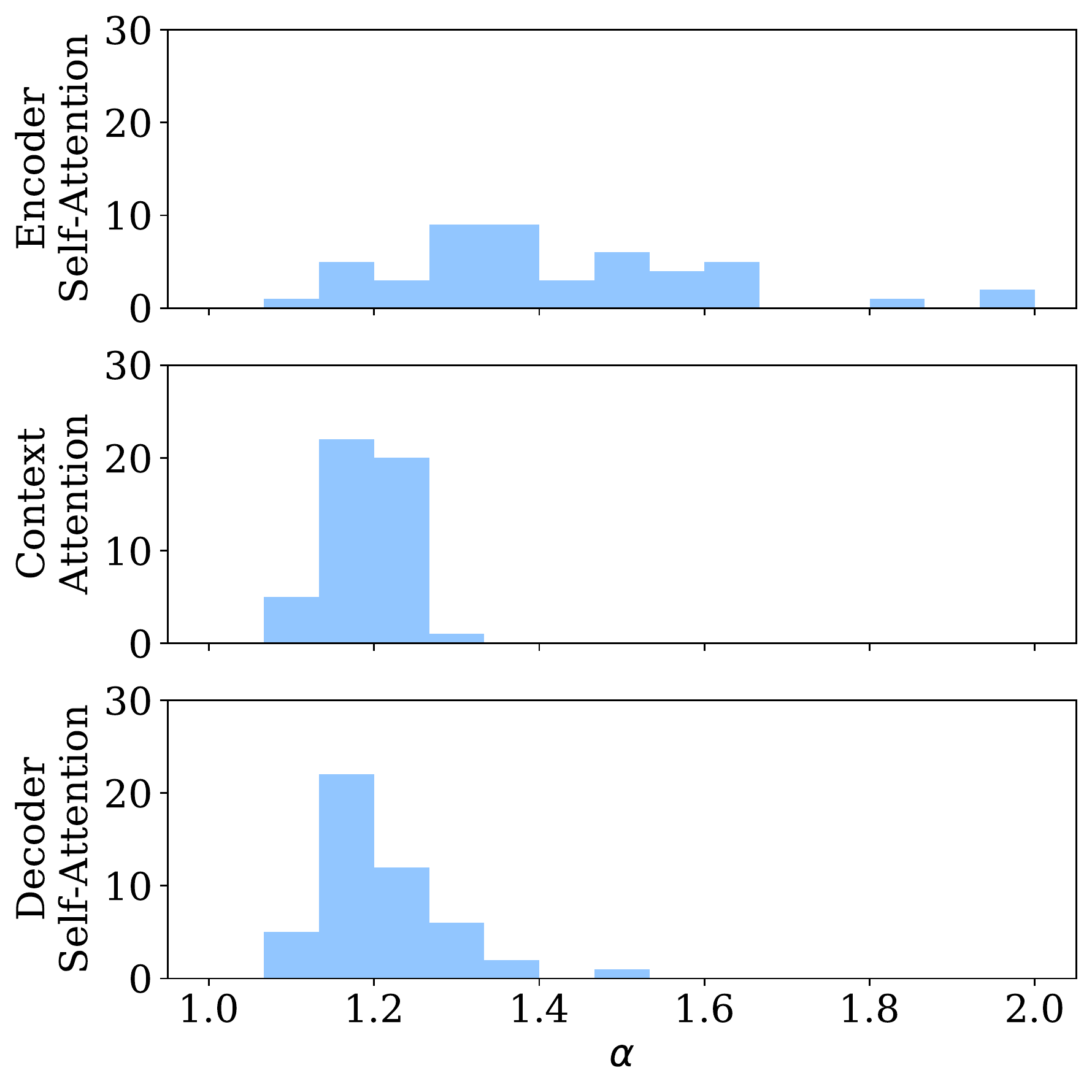}
    \caption{%
\label{fig:hist_alphas_ja}%
KFTT \langp{ja}{en}.}
\end{subfigure}

\begin{subfigure}[b]{.47\linewidth}
    \includegraphics[width=\linewidth]{figures/hist_alphas.pdf}
    \caption{%
\label{fig:hist_alphas_en}%
WMT 2014 \langp{en}{de}.}
\end{subfigure}
\begin{subfigure}[b]{.47\linewidth}
    \includegraphics[width=\linewidth]{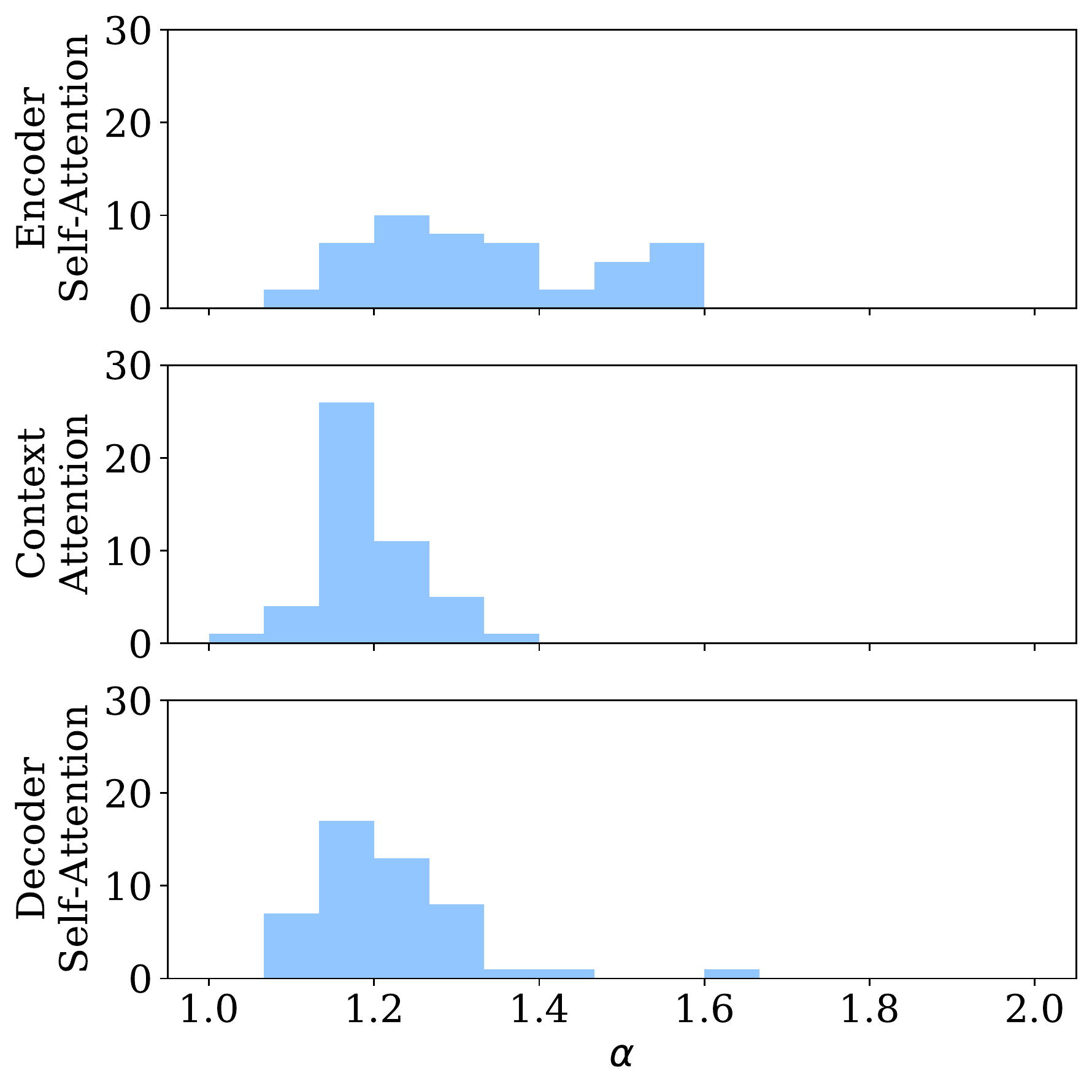}
    \caption{%
\label{fig:hist_alphas_de}%
IWSLT 2017 \langp{de}{en}.}
\end{subfigure}
\caption{Histograms of $\alpha$ values.}
\label{fig:hist_alphas_lps}
\end{figure}

\begin{figure}[h!]
    \centering
    \begin{subfigure}[b]{.47\linewidth}
    \includegraphics[width=\linewidth]{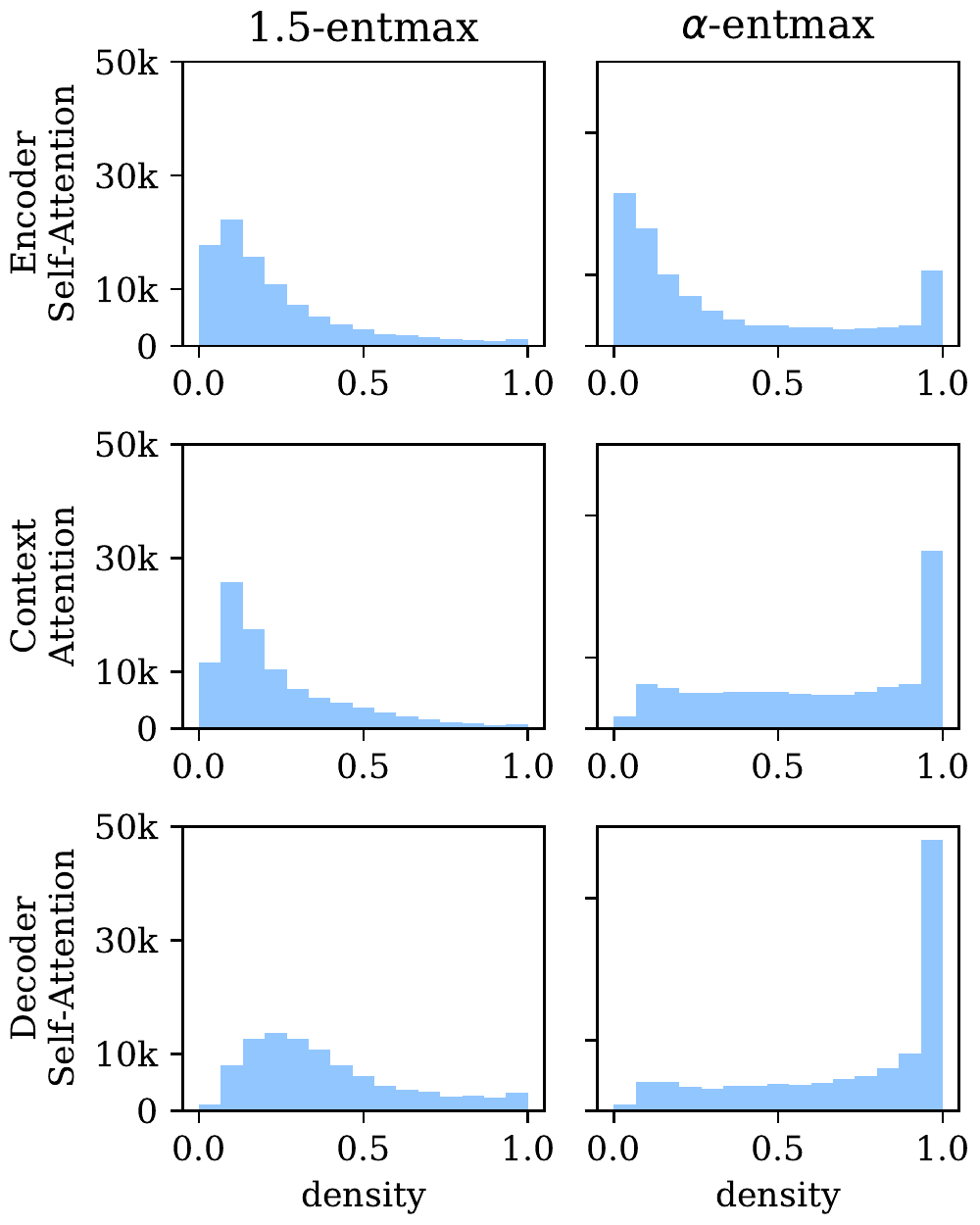}
    \caption{%
\label{fig:hist_densities_ro}%
WMT 2016 \langp{ro}{en}.}
\end{subfigure}
\begin{subfigure}[b]{.47\linewidth}
    \includegraphics[width=\linewidth]{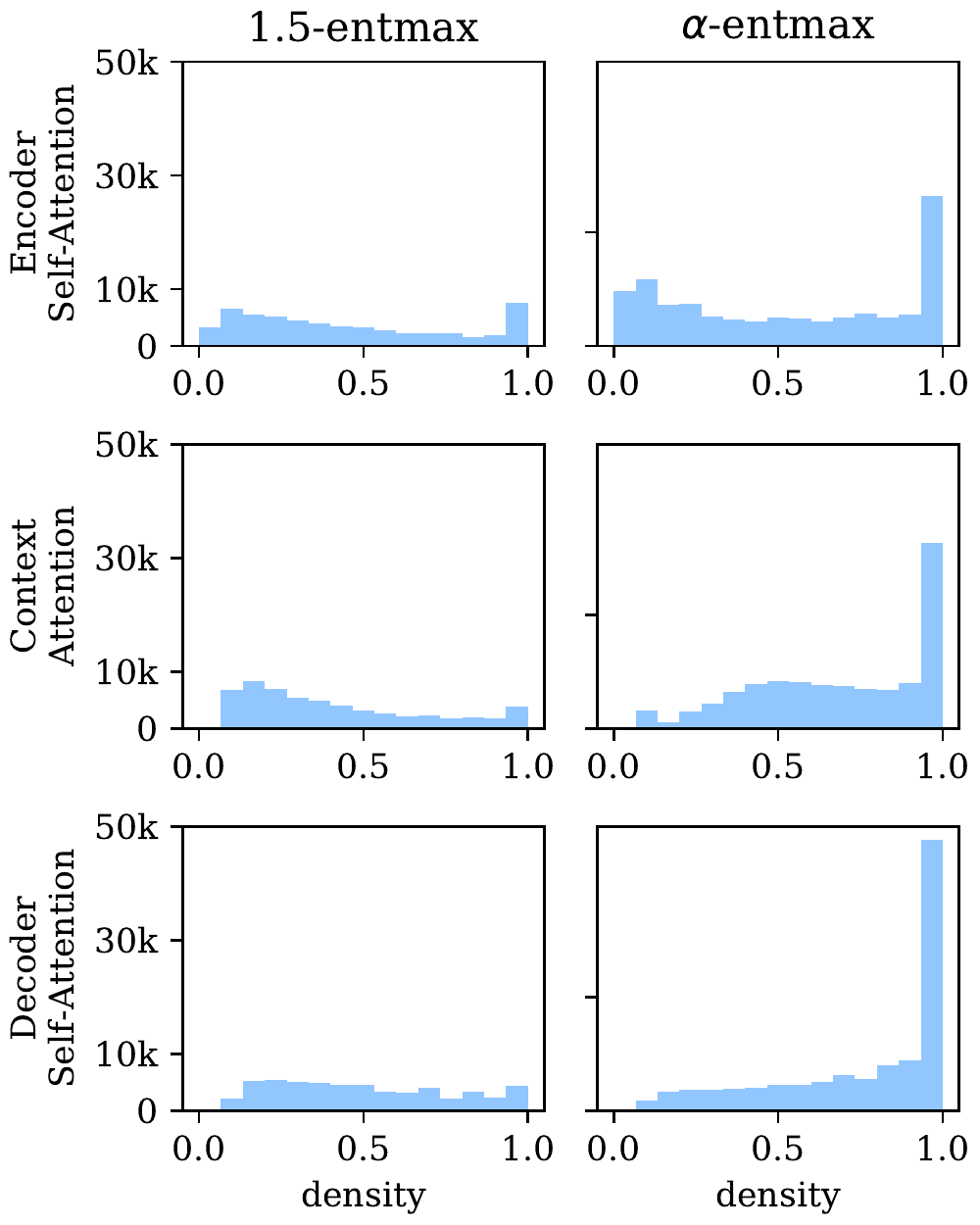}
    \caption{%
\label{fig:hist_densities_ja}%
KFTT \langp{ja}{en}.}
\end{subfigure}

\begin{subfigure}[b]{.47\linewidth}
    \includegraphics[width=\linewidth]{figures/hist_densities.pdf}
    \caption{%
\label{fig:hist_densities_en}%
WMT 2014 \langp{en}{de}.}
\end{subfigure}
\begin{subfigure}[b]{.47\linewidth}
    \includegraphics[width=\linewidth]{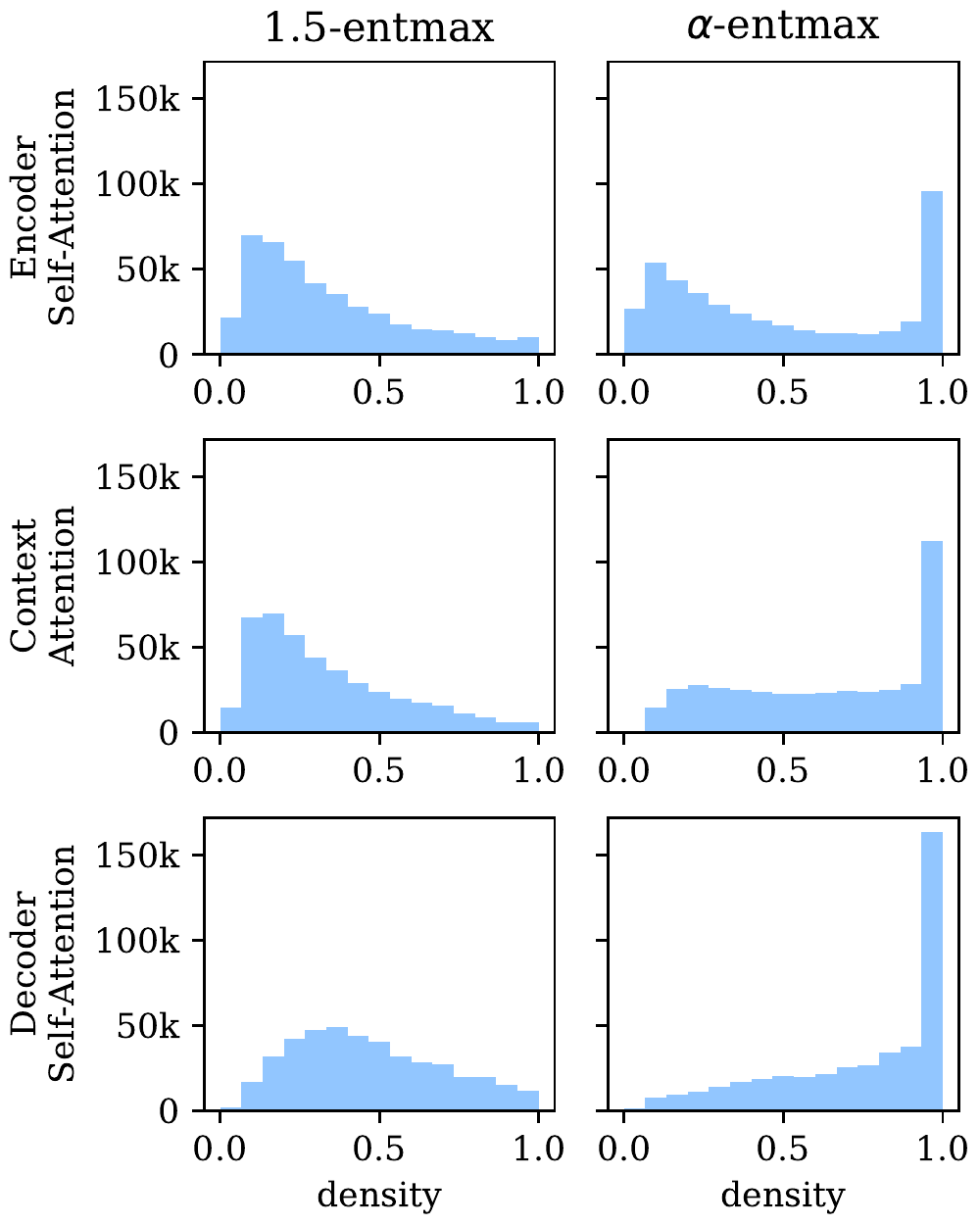}
    \caption{%
\label{fig:hist_densities_de}%
IWSLT 2017 \langp{de}{en}.}
\end{subfigure}
\caption{Histograms of head densities.}
\label{fig:hist_densities_lps}
\end{figure}

\begin{figure}[h!]
    \centering
    \begin{subfigure}[b]{.47\linewidth}
    \includegraphics[width=\linewidth]{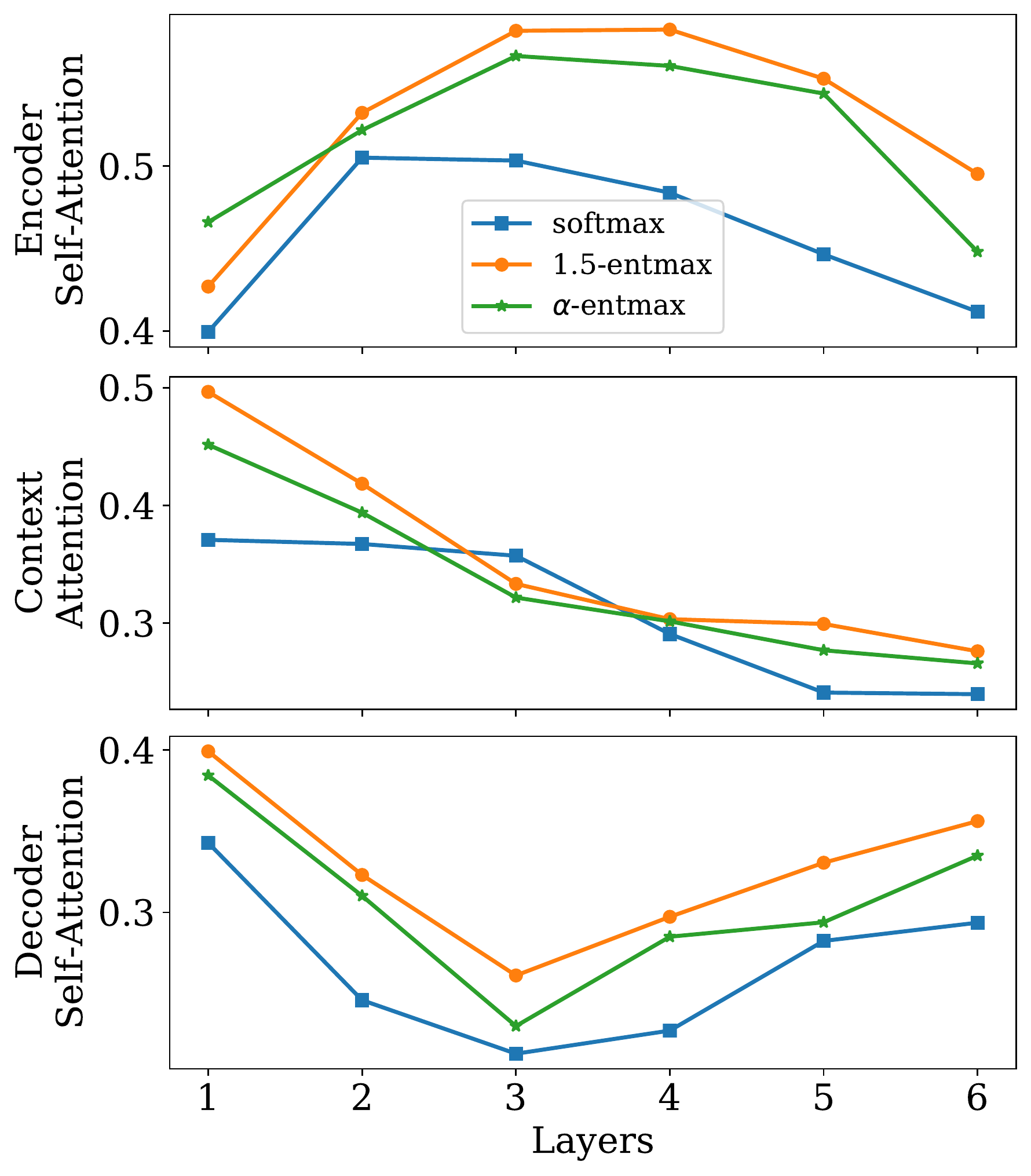}
    \caption{%
\label{fig:js_divs_ro}%
WMT 2016 \langp{ro}{en}.}
\end{subfigure}
\begin{subfigure}[b]{.47\linewidth}
    \includegraphics[width=\linewidth]{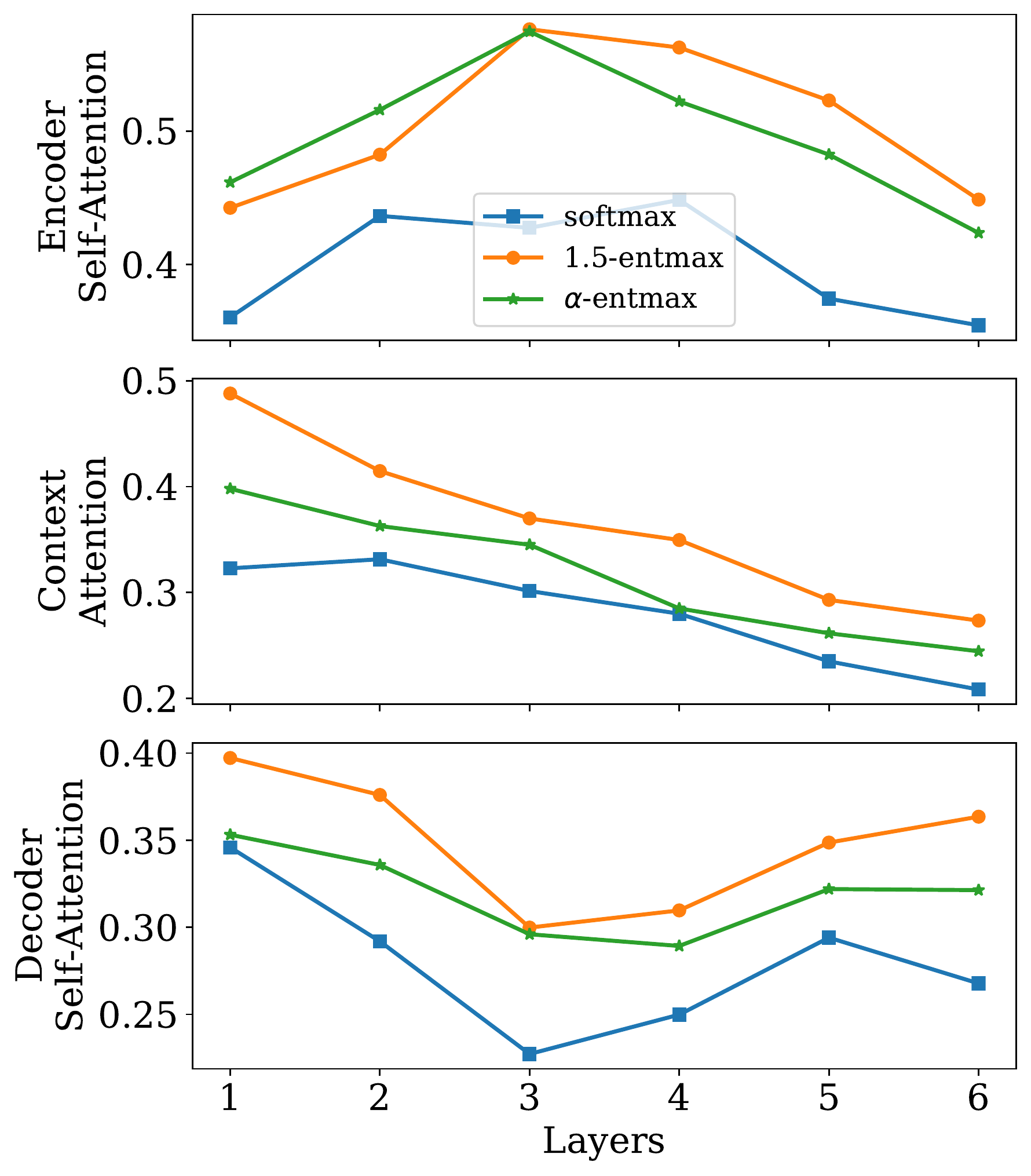}
    \caption{%
\label{fig:js_divs_ja}%
KFTT \langp{ja}{en}.}
\end{subfigure}

\begin{subfigure}[b]{.47\linewidth}
    \includegraphics[width=\linewidth]{figures/js_divs.pdf}
    \caption{%
\label{fig:js_divs_en}%
WMT 2014 \langp{en}{de}.}
\end{subfigure}
\begin{subfigure}[b]{.47\linewidth}
    \includegraphics[width=\linewidth]{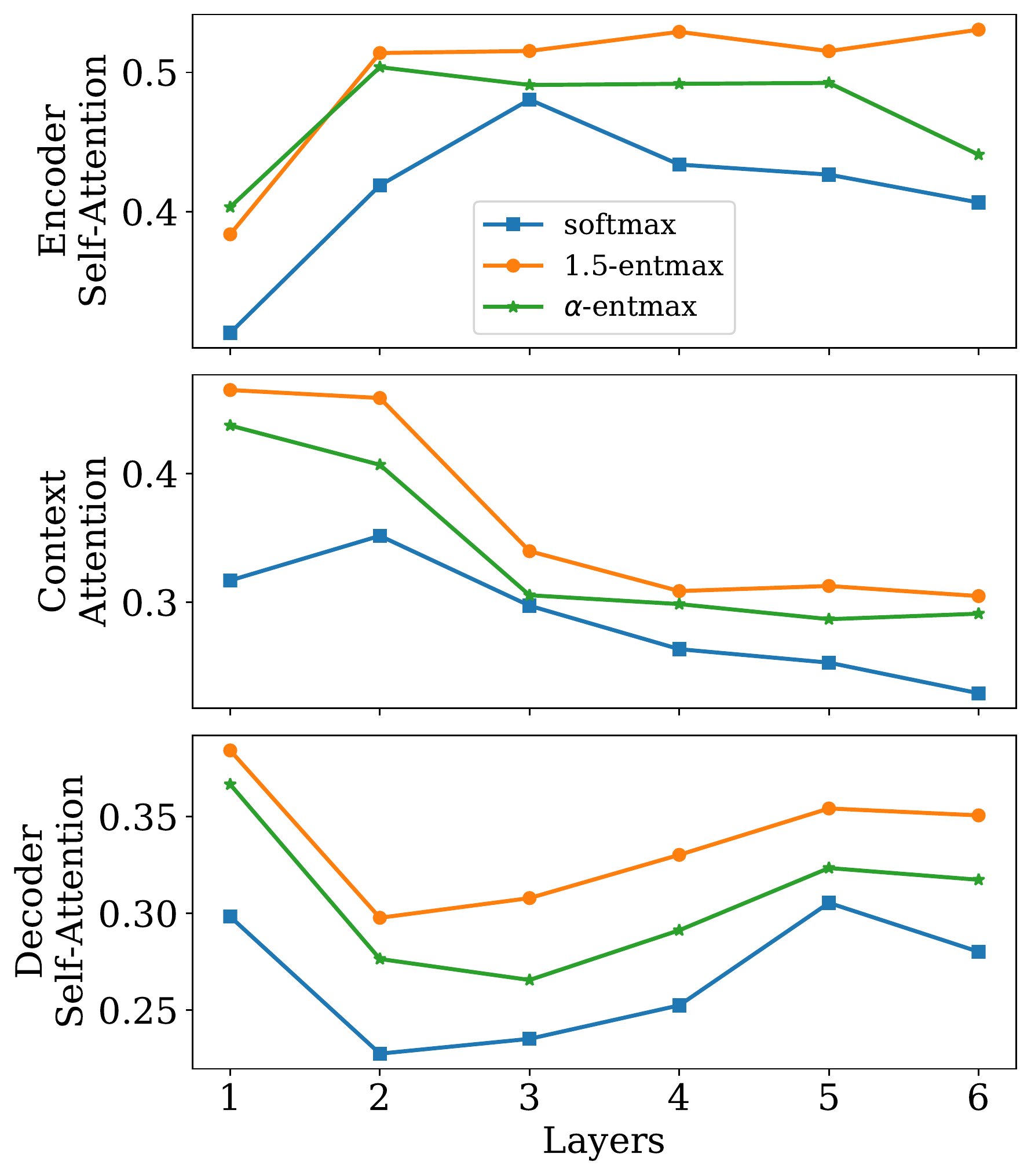}
    \caption{%
\label{fig:js_divs_de}%
IWSLT 2017 \langp{de}{en}.}
\end{subfigure}
\caption{Jensen-Shannon divergence over layers.}
\label{fig:js_divs_lps}
\end{figure}

\begin{figure}[h!]
    \centering
    \begin{subfigure}[b]{.47\linewidth}
    \includegraphics[width=\linewidth]{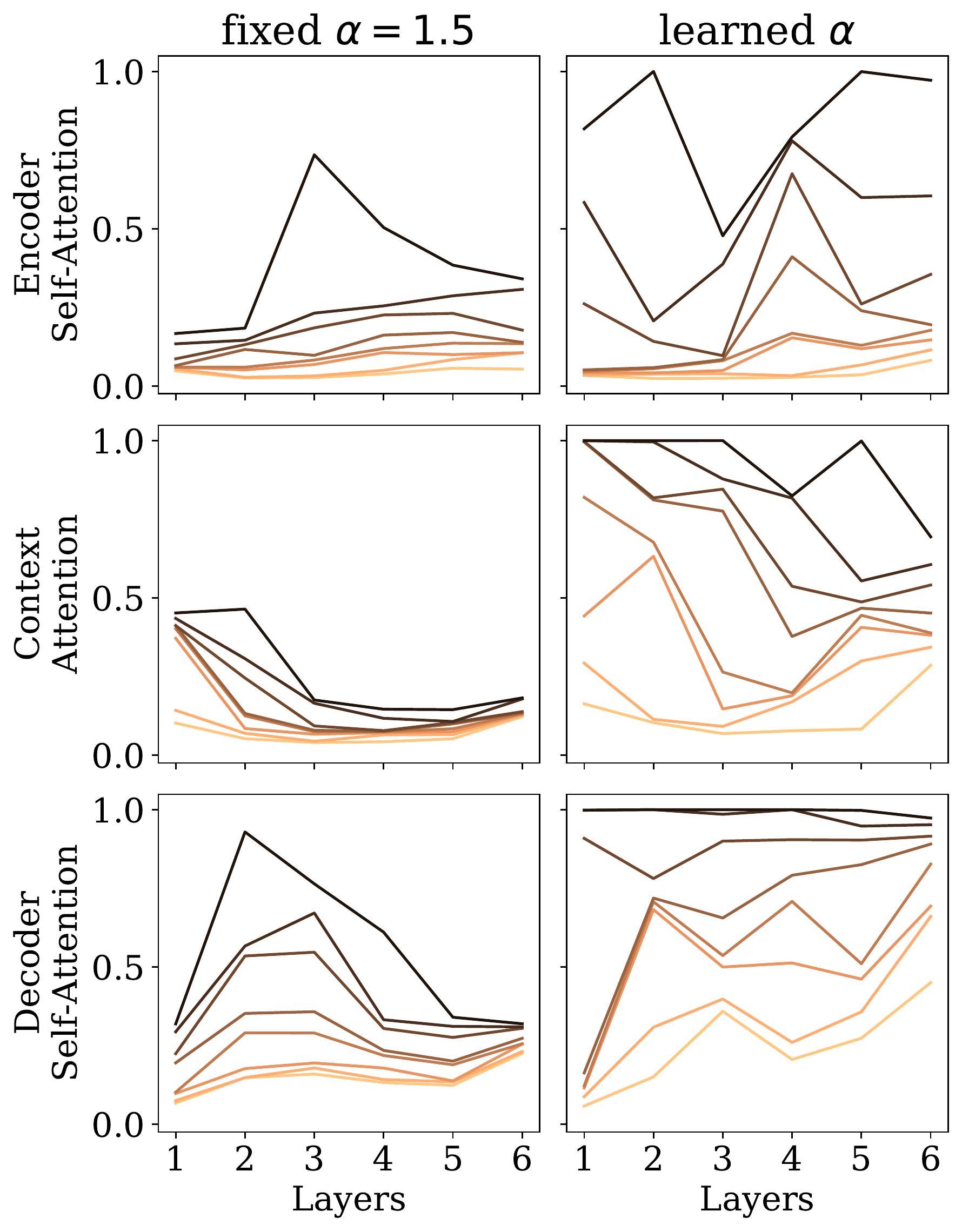}
    \caption{%
\label{fig:head_density_per_layer_ro}%
WMT 2016 \langp{ro}{en}.}
\end{subfigure}
\begin{subfigure}[b]{.47\linewidth}
    \includegraphics[width=\linewidth]{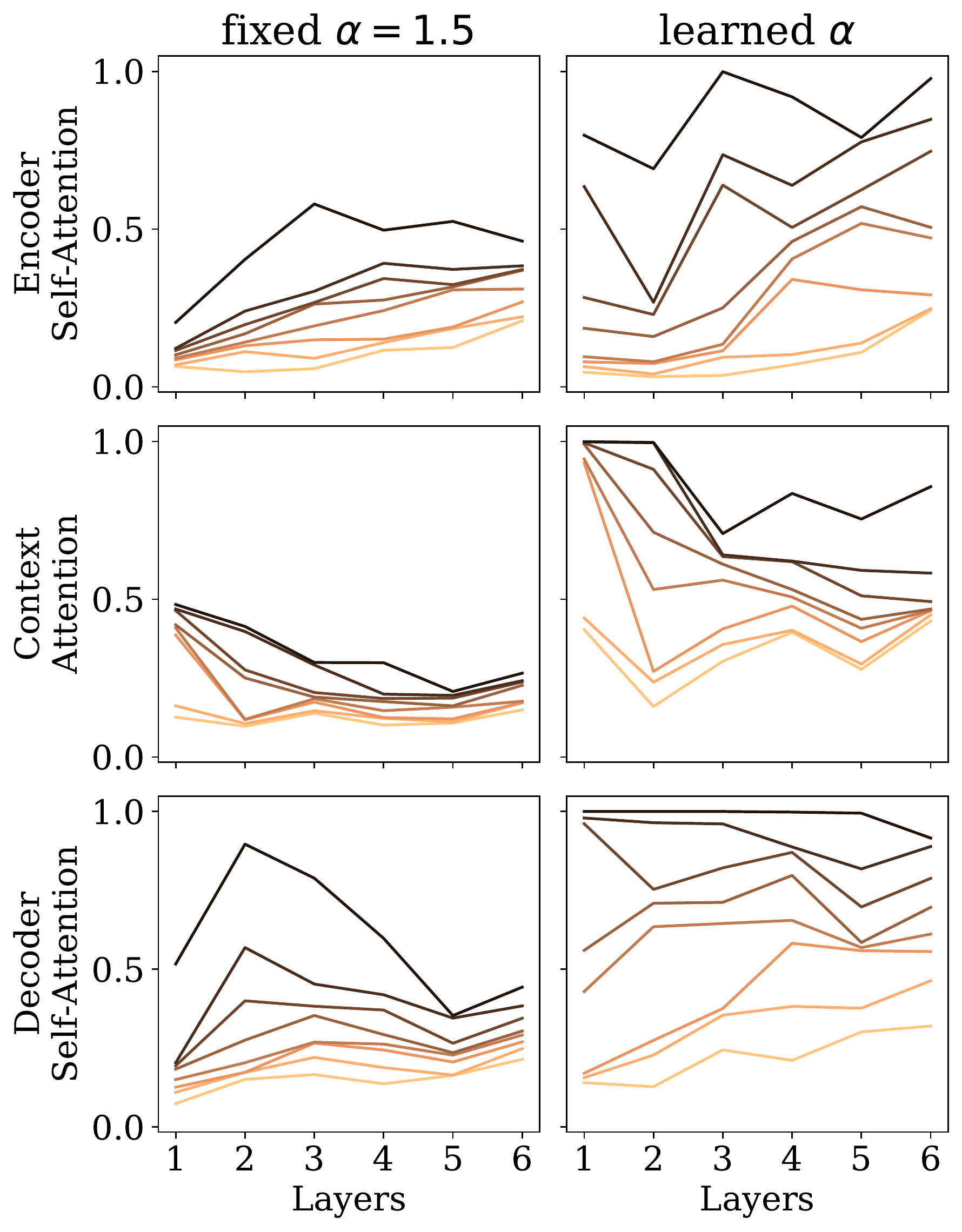}
    \caption{%
\label{fig:head_density_per_layer_ja}%
KFTT \langp{ja}{en}.}
\end{subfigure}

\begin{subfigure}[b]{.47\linewidth}
    \includegraphics[width=\linewidth]{figures/head_density_per_layer.pdf}
    \caption{%
\label{fig:head_density_per_layer_en}%
WMT 2014 \langp{en}{de}.}
\end{subfigure}
\begin{subfigure}[b]{.47\linewidth}
    \includegraphics[width=\linewidth]{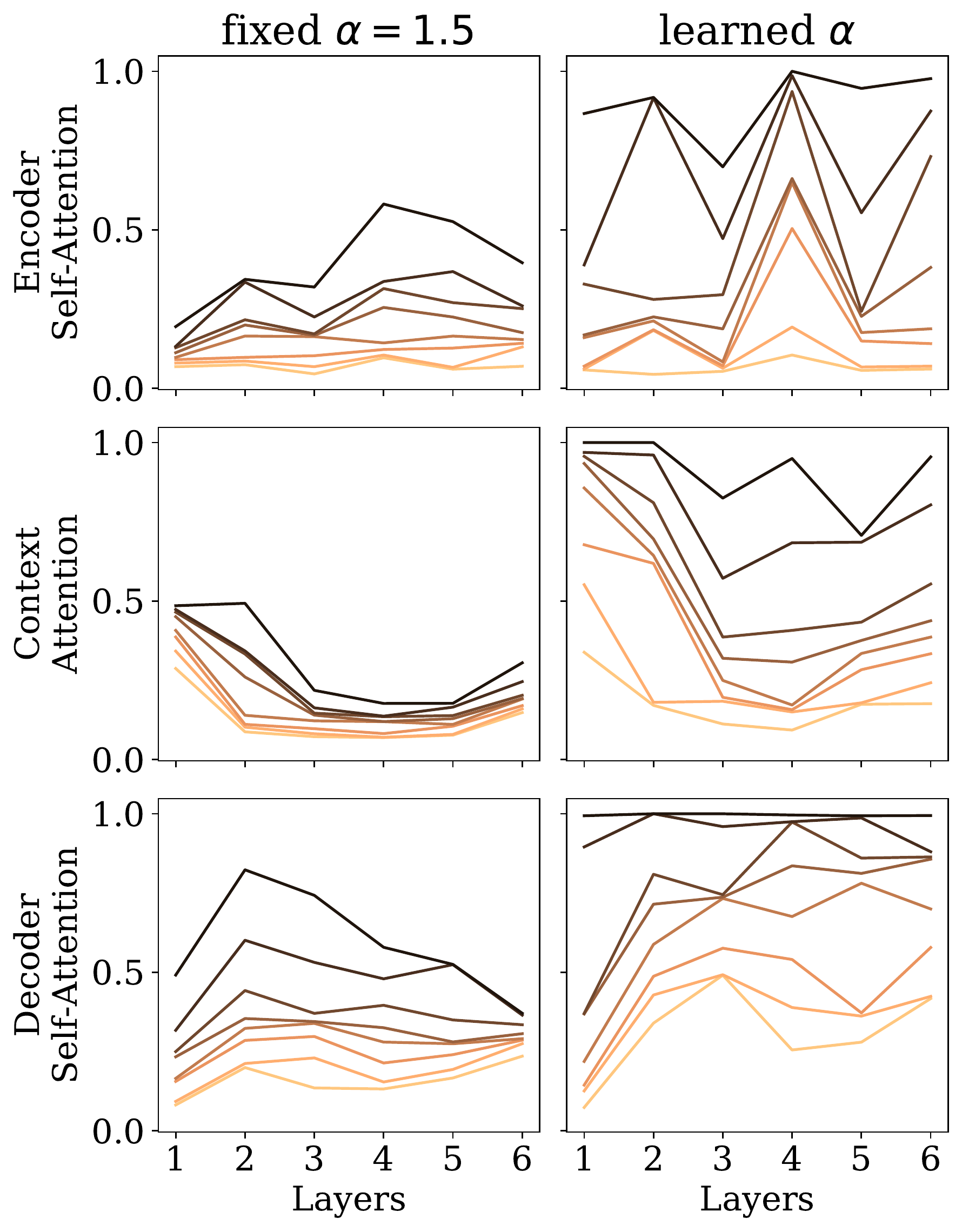}
    \caption{%
\label{fig:head_density_per_layer_de}%
IWSLT 2017 \langp{de}{en}.}
\end{subfigure}
\caption{Head densities over layers.}
\label{fig:head_density_per_layer_lps}
\end{figure}

\clearpage

\section{Background}

\subsection{Regularized Fenchel-Young prediction functions}
\begin{definition}[\citealt{blondel2019learning}]\label{def:rpf}

Let $\Omega \colon \simplex^d \to \RR \cup \{\infty\}$ be a strictly convex
regularization function. We define the prediction function $\amap_{\Omega}$ as
\begin{equation}
\amap_{\Omega}(\x) = \argmax_{\p \in \simplex^d}
    \big(\p^\top\x - \Omega(\p)\big)
\label{eq:amap}
\end{equation}
\end{definition}

\subsection{Characterizing the {\boldmath $\alpha$-\entmaxtext} mapping}\label{sec:bgform}

\begin{lemma}[\citealt{entmax}]
\label{lemma:tsallis_reduction}%
For any $\x$, there exists a unique $\tau^\star$ such that
\begin{equation}\label{eq:entmax_form_supp}
\aentmax(\x)
= [(\alpha - 1){\x} - \tau^\star \ones]_+^{\nicefrac{1}{\alpha-1}}.
\end{equation}
\end{lemma}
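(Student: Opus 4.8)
The plan is to treat \eqnref{define_entmax} as a constrained concave program and read the claimed closed form off its KKT conditions. First, observe that for $\alpha>1$ the Tsallis entropy $\HHt_\alpha$ is continuous and strictly concave on the compact convex set $\simplex^d$, so the objective $\p \mapsto \p^\top\x + \HHt_\alpha(\p)$ attains its maximum at a unique point $\p^\star$ (equivalently, $\aentmax = \amap_{-\HHt_\alpha}$ in the sense of Definition~\ref{def:rpf}, with $-\HHt_\alpha$ strictly convex). This already settles existence and uniqueness of the maximizer; the remaining work is to exhibit the scalar $\tau^\star$ and the truncated-power form.

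Next I would write the Lagrangian with a multiplier $\tau \in \reals$ for the equality constraint $\ones^\top\p = 1$ and multipliers $\bm{\mu} \in \reals^d_{\geq 0}$ for $-\p \leq \bm{0}$. Since the problem is concave, differentiable on $\simplex^d$ (for $\alpha>1$, $\pfrac{}{\pp_i}\HHt_\alpha(\p) = \frac{1 - \alpha \pp_i^{\alpha-1}}{\alpha(\alpha-1)}$ is well-defined, including at $\pp_i = 0$), and strictly feasible (the uniform distribution lies in the relative interior), the KKT conditions are necessary and sufficient. Stationarity in coordinate $i$ reads $\xx_i + \frac{1 - \alpha(\pp^\star_i)^{\alpha-1}}{\alpha(\alpha-1)} - \tau + \mu_i = 0$, with complementary slackness $\mu_i\pp^\star_i = 0$. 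For coordinates with $\pp^\star_i>0$ we have $\mu_i = 0$; solving for $\pp^\star_i$ and folding the additive constant into the multiplier (redefining $\tau^\star$ accordingly) gives $\pp^\star_i = \big((\alpha-1)\xx_i - \tau^\star\big)^{1/(\alpha-1)}$. For coordinates with $\pp^\star_i = 0$, the constraint $\mu_i \geq 0$ together with stationarity forces $(\alpha-1)\xx_i - \tau^\star \leq 0$. The two cases combine exactly into $\pp^\star_i = [(\alpha-1)\xx_i - \tau^\star]_+^{1/(\alpha-1)}$, which is \eqnref{entmax_form_supp}.

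It remains to pin down $\tau^\star$ and show it is unique. Define $f(\tau) \coloneqq \sum_i [(\alpha-1)\xx_i - \tau]_+^{1/(\alpha-1)}$. This is continuous and nonincreasing in $\tau$, strictly decreasing on the set where it is positive, satisfies $f(\tau) \to \infty$ as $\tau \to -\infty$, and vanishes once $\tau \geq (\alpha-1)\max_i \xx_i$; by the intermediate value theorem there is exactly one $\tau^\star$ with $f(\tau^\star) = 1$, i.e.\ for which the right-hand side of \eqnref{entmax_form_supp} lies in $\simplex^d$. Since the unique maximizer $\p^\star$ must satisfy the stationarity form above for some feasible $\tau$, and feasibility forces that $\tau$ to be $\tau^\star$, the identity follows.

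The step I expect to be most delicate is the boundary analysis: verifying that $\HHt_\alpha$ remains differentiable on faces of $\simplex^d$ where $\pp_i = 0$ when $\alpha>1$ — so the multipliers $\mu_i$ are legitimately defined — and checking that the ReLU truncation reproduces exactly the active-constraint behavior rather than merely an inequality. Everything else (existence and uniqueness of the maximizer, monotonicity of $f$) is routine convex analysis. The degenerate case $\alpha=1$ lies outside the scope of this lemma's formula: there the entropy gradient $-\log\pp_i - 1$ diverges at the boundary, the solution is automatically dense, and a direct computation recovers the normalized-exponential form instead (see \appref{softmax}).
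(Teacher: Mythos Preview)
Your argument is correct: the KKT stationarity and complementary-slackness conditions do yield the truncated-power form, and your monotonicity argument for $f(\tau)=\sum_i[(\alpha-1)\xx_i-\tau]_+^{1/(\alpha-1)}$ pins down the unique threshold. The absorption of the additive constant $\tfrac{1}{\alpha}$ into the redefined $\tau^\star$ is exactly right, and your remark that $\HHt_\alpha$ is differentiable at $\pp_i=0$ for $\alpha>1$ (so the boundary multipliers are legitimate) addresses the one genuinely delicate point.

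The paper takes a different route. Rather than writing out KKT conditions by hand, it invokes the general machinery of regularized prediction functions from \citet{blondel2019learning}: it rewrites $-\HHt_\alpha$ as a separable regularizer $\Omega(\p)=\sum_j g(\pp_j)$ with $g(t)=t^\alpha/\alpha$ (after using scale- and constant-invariance of $\amap_\Omega$), and then applies their Proposition~5, which gives the closed form $[\x-\tau^\star\ones]_+^{\,{(g')}^{-1}}$ for any separable $\Omega$. Uniqueness of $\tau^\star$ is argued indirectly, from uniqueness of $\p^\star$ and the observation that \eqnref{entmax_form_supp} defines a one-to-one correspondence between $\p^\star$ and $\tau^\star$. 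Your proof is more elementary and self-contained (no external propositions needed), and your direct monotonicity argument for uniqueness of $\tau^\star$ is arguably cleaner than the paper's bijection remark. The paper's approach, in exchange, situates the result within a broader framework and makes the generalization to other separable entropies immediate.
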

\begin{proof}
From the definition of $\aentmax$,
\begin{equation}
    \aentmax(\bm{z}) \coloneqq
    \argmax_{\p \in \simplex^d} \bm{p}^\top\bm{z} + \HHt_{\alpha}(\bm{p}),
\end{equation}
we may easily identify it with a regularized prediction function
(Def.~\ref{def:rpf}):
\[\aentmax(\x) \equiv \amap_{-\HHt_{\alpha}}(\x).\]
We first note that for all $\p \in \triangle^d$,
\begin{equation}
-(\alpha-1) \HHt_\alpha(\p) = \frac{1}{\alpha} \sum_{i=1}^d p_i^\alpha + \text{const}.
\end{equation}
From the constant invariance and
scaling properties of $\amap_{\Omega}$
\citep[Proposition~1, items~4--5]{blondel2019learning},
\[\amap_{-\HHt_\alpha}(\x)
= \amap_{\Omega}((\alpha - 1)\x),
\quad\text{with}\quad
\Omega(\p) =
\sum_{j=1}^{d} g(\pp_j),
\quad
g(t) = \frac{t^\alpha}{\alpha}.
\]
Using \citep[Proposition~5]{blondel2019learning}, noting that
$g'(t) = t^{\alpha - 1}$ and $(g')^{-1}(u) =  u^{\nicefrac{1}{\alpha-1}}$,
yields
\begin{equation}\label{eq:rootform}
\amap_{\Omega}(\x) = [\x - \tau^\star \ones]_+^{\nicefrac{1}{\alpha-1}},
\quad\text{and therefore}\quad
\aentmax(\x) = [(\alpha-1)\x - \tau^\star \ones]_+^{\nicefrac{1}{\alpha-1}}.
\end{equation}
Since $\HHt_\alpha$ is strictly convex on the simplex, $\aentmax$ has a unique
solution $\p^\star$. \eqnref{rootform}
implicitly defines a one-to-one mapping between $\p^\star$ and $\tau^\star$
as long as $\p^\star \in \simplex$,
therefore $\tau^\star$ is also unique.
\end{proof}

\subsection{Connections to softmax and sparsemax}\label{sec:softmax}
The Euclidean projection onto the simplex, sometimes referred to, in the context of
neural attention, as sparsemax \citep{sparsemax},
is defined as
\begin{equation}
\sparsemax(\x) \coloneqq \argmin_{\p \in \simplex} \| \p - \x \|_2^2.
\end{equation}
The solution can be characterized through the
unique threshold $\thresh$ such that $\sum_i
\sparsemax(\x)_i = 1$ and \citep{Held1974}
\begin{equation}
\sparsemax(\x) = [\x - \tau \bm{1}]_+~.
\end{equation}
Thus, each coordinate of the sparsemax solution is a piecewise-linear function.
Visibly, this expression is recovered when setting $\alpha=2$ in the
$\alpha$-\entmaxtext expression (\eqnref{entmax_form_supp}); for other
values of $\alpha$, the exponent induces curvature.

On the other hand, the well-known softmax is usually defined through the
expression
\begin{equation}\label{eq:softmax-supp}
\softmax(\x)_i \coloneqq \frac{\exp(\xx_i)}{\sum_j \exp(\xx_j)},
\end{equation}
which can be shown to be the unique solution of the optimization problem
\begin{equation}
\softmax(\x)_i =
\argmax_{\p \in \simplex} \p^\top\x + \HHs(\p),
\end{equation}
where $\HHs(\p) \coloneqq -\sum_i \pp_i \log \pp_i$ is the Shannon entropy.
Indeed, setting the gradient to $0$ yields the condition
$\log \pp_i = \xx_j - \nu_i - \tau - 1$, where $\thresh$ and $\nu > 0$ are Lagrange
multipliers for the simplex constraints $\sum_i \pp_i = 1$ and $\pp_i \geq 0$,
respectively. Since the \lhs is only finite for $\pp_i>0$,
we must have $\nu_i=0$ for all $i$, by complementary
slackness. Thus, the solution must have the form $\pp_i =
\nicefrac{\exp(\xx_i)}{Z}$, yielding \eqnref{softmax-supp}.
\section{Jacobian of {\boldmath $\alpha$}-\entmaxtext \wrt the shape parameter
{\boldmath $\alpha$}:
Proof of Proposition~\ref{prop:grad_alpha}}
\label{sec:alpha_grad}
Recall that the \entmaxtext transformation is defined as:
\begin{equation}\label{eq:entmax_form_supp}
    \aentmax(\bm{z}) \coloneqq
    \argmax_{\p \in \simplex^d} \bm{p}^\top\bm{z} + \HHt_{\alpha}(\bm{p}),
\end{equation}
where $\alpha \geq 1$ and $\HHt_{\alpha}$ is the Tsallis entropy,
\begin{equation}%
    \HHt_{\alpha}(\bm{p})\!\coloneqq\!
\begin{cases}
\frac{1}{\alpha(\alpha-1)}\sum_j\!\left(p_j - p_j^\alpha\right)\!, &
\!\!\!\alpha \ne 1,\\
\HHs(\bm{p}), &
\!\!\!\alpha = 1,
\end{cases}
\end{equation}
and $\HHs(\bm{p}):= -\sum_j p_j \log p_j$ is the Shannon entropy.

In this section, we derive the Jacobian of $\entmax$ with respect to the scalar parameter $\alpha$.

\subsection{General case of {\boldmath $\alpha>1$}}

From the KKT conditions associated with the optimization problem in
Eq.~\ref{eq:entmax_form_supp}, we have that the solution $\bm{p}^{\star}$ has the following form, coordinate-wise:
\begin{equation}\label{eq:p_kkt}
    p_i^{\star} = [(\alpha-1)(z_i - \tau^{\star})]_+^{1/(\alpha-1)},
\end{equation}
where $\tau^{\star}$ is a scalar Lagrange multiplier that ensures that
$\bm{p}^{\star}$ normalizes to 1, \ie, it is defined implicitly by the condition:
\begin{equation}\label{eq:tau_condition}
    \sum_i [(\alpha-1)(z_i - \tau^{\star})]_+^{1/(\alpha-1)} = 1.
\end{equation}
For general values of $\alpha$, Eq.~\ref{eq:tau_condition} lacks a closed form solution. This makes the computation of the
Jacobian
\begin{equation}
    \frac{\partial \aentmax(\bm{z})}{\partial \alpha}
\end{equation}
non-trivial. Fortunately, we can use the technique of implicit differentiation
to obtain this Jacobian.

The Jacobian exists almost everywhere, and the expressions we derive
expressions yield a generalized Jacobian \citep{clarke_book} at any
non-differentiable points that may occur for certain ($\alpha$, $\x$) pairs.
We begin by noting that $\frac{\partial p_i^{\star}}{\partial \alpha} = 0$ if
$p_i^{\star} = 0$, because increasing $\alpha$ keeps sparse coordinates
sparse.\footnote{This follows from the margin property of $\HHt_\alpha$
 \citep{blondel2019learning}.}
Therefore we need to worry only
about coordinates that are in the support of $\bm{p}^\star$. We will assume
hereafter that the $i$\textsuperscript{th} coordinate of $\bm{p}^\star$ is non-zero.
We have:
\begin{eqnarray}\label{eq:gradient_alpha_01}
    \frac{\partial p_i^{\star}}{\partial \alpha} &=& \frac{\partial}{\partial \alpha} [(\alpha-1)(z_i - \tau^{\star})]^{\frac{1}{\alpha-1}}\nonumber\\
    &=& \frac{\partial}{\partial \alpha} \exp \left[\frac{1}{\alpha-1} \log [(\alpha-1)(z_i - \tau^{\star})]\right]\nonumber\\
    &=& p_i^{\star} \frac{\partial}{\partial \alpha} \left[\frac{1}{\alpha-1} \log [(\alpha-1)(z_i - \tau^{\star})]\right]\nonumber\\
    &=& \frac{p_i^{\star}}{(\alpha-1)^2} \left[\frac{\frac{\partial}{\partial \alpha} [(\alpha-1)(z_i - \tau^{\star})]}{z_i - \tau^{\star}} - \log[(\alpha-1)(z_i - \tau^{\star})] \right]\nonumber\\
    &=& \frac{p_i^{\star}}{(\alpha-1)^2} \left[\frac{z_i - \tau^{\star} - (\alpha-1)\frac{\partial \tau^{\star}}{\partial \alpha} }{z_i - \tau^{\star}} - \log[(\alpha-1)(z_i - \tau^{\star})] \right]\nonumber\\
    &=& \frac{p_i^{\star}}{(\alpha-1)^2} \left[1 - \frac{\alpha-1}{z_i - \tau^{\star}}\frac{\partial \tau^{\star}}{\partial \alpha} - \log[(\alpha-1)(z_i - \tau^{\star})] \right].
\end{eqnarray}
We can see that this Jacobian depends on $\frac{\partial \tau^{\star}}{\partial \alpha}$, which we now compute using implicit differentiation.

Let $\mathcal{S} = \{i: \pp^\star_i > 0 \}$).
By differentiating both sides of Eq.~\ref{eq:tau_condition}, re-using some of
the steps in Eq.~\ref{eq:gradient_alpha_01}, and recalling Eq.~\ref{eq:p_kkt},
we get
\begin{eqnarray}\label{eq:gradient_tau_implicit}
    0 &=& \sum_{i \in \mathcal{S}} \frac{\partial}{\partial \alpha} [(\alpha-1)(z_i - \tau^{\star})]^{1/(\alpha-1)}\nonumber\\
    &=& \sum_{i \in \mathcal{S}} \frac{p_i^{\star}}{(\alpha-1)^2} \left[1 - \frac{\alpha-1}{z_i - \tau^{\star}}\frac{\partial \tau^{\star}}{\partial \alpha} - \log[(\alpha-1)(z_i - \tau^{\star})] \right]\nonumber\\
    &=&  \frac{1}{(\alpha-1)^2} - \frac{\partial \tau^{\star}}{\partial \alpha} \sum_{i \in \mathcal{S}} \frac{p_i^{\star}}{(\alpha - 1)(z_i - \tau^{\star})} - \sum_{i \in \mathcal{S}} \frac{p_i^{\star}}{(\alpha-1)^2} \log[(\alpha-1)(z_i - \tau^{\star})] \nonumber\\
    &=&  \frac{1}{(\alpha-1)^2} - \frac{\partial \tau^{\star}}{\partial \alpha} \sum_{i} (p_i^{\star})^{2-\alpha} - \sum_{i} \frac{p_i^{\star}}{\alpha-1} \log p_i^{\star}\nonumber\\
    &=&  \frac{1}{(\alpha-1)^2} - \frac{\partial \tau^{\star}}{\partial \alpha} \sum_{i} (p_i^{\star})^{2-\alpha} + \frac{\HHs(\bm{p}^*)}{\alpha-1},
\end{eqnarray}
from which we obtain:
\begin{eqnarray}\label{eq:gradient_tau}
    \frac{\partial \tau^{\star}}{\partial \alpha} &=& \frac{\frac{1}{(\alpha-1)^2} + \frac{\HHs(\bm{p}^{\star})}{\alpha-1}}{\sum_i (p_i^{\star})^{2-\alpha}}.
\end{eqnarray}

Finally, plugging Eq.~\ref{eq:gradient_tau} into Eq.~\ref{eq:gradient_alpha_01}, we get:
\begin{eqnarray}\label{eq:gradient_alpha}
    \frac{\partial p_i^{\star}}{\partial \alpha} &=&  \frac{p_i^{\star}}{(\alpha-1)^2} \left[1 - \frac{1}{(p_i^{\star})^{\alpha-1}}\frac{\partial \tau^{\star}}{\partial \alpha} - (\alpha-1)\log p_i^{\star} \right]\nonumber\\
    &=&  \frac{p_i^{\star}}{(\alpha-1)^2} \left[1 - \frac{1}{(p_i^{\star})^{\alpha-1}}\frac{\frac{1}{(\alpha-1)^2} + \frac{\HHs(\bm{p}^{\star})}{\alpha-1}}{\sum_i (p_i^{\star})^{2-\alpha}} - (\alpha-1)\log p_i^{\star} \right]\nonumber\\
    &=& \frac{p_i^{\star} - \tilde{p}_i(\alpha)}{(\alpha-1)^2} - \frac{p_i^{\star}\log p_i^{\star} + \tilde{p}_i(\alpha)\HHs(\bm{p}^{\star})}{\alpha-1},
\end{eqnarray}
where we denote by
\begin{equation}
    \tilde{p}_i(\alpha) = \frac{(p_i^{\star})^{2-\alpha}}{\sum_j (p_j^{\star})^{2-\alpha}}.
\end{equation}
The distribution $\tilde{\bm{p}}(\alpha)$ can be interpreted as a ``skewed''
distribution obtained from $\bm{p}^{\star}$, which appears in the Jacobian of
$\aentmax(\x)$ \wrt $\x$ as well~\cite{entmax}.

\subsection{Solving the indetermination for $\alpha=1$}

We can write Eq.~\ref{eq:gradient_alpha} as
\begin{eqnarray}\label{eq:gradient_alpha_fraction}
    \frac{\partial p_i^{\star}}{\partial \alpha} &=&
    \frac{p_i^{\star} - \tilde{p}_i(\alpha) - (\alpha-1)(p_i^{\star}\log p_i^{\star} + \tilde{p}_i(\alpha)\HHs(\bm{p}^{\star}))}{(\alpha-1)^2}.
\end{eqnarray}
When $\alpha \rightarrow 1^+$, we have $\tilde{\bm{p}}(\alpha) \rightarrow \bm{p}^{\star}$, which leads to a $\frac{0}{0}$ indetermination.

To solve this indetermination, we will need to apply L'H\^opital's rule twice.
Let us first compute the derivative of $\tilde{p}_i(\alpha)$ with respect to $\alpha$. We have
\begin{equation}
\frac{\partial}{\partial \alpha} (p_i^\star)^{2-\alpha} = -(p_i^{\star})^{2-\alpha} \log p_i^{\star},
\end{equation}
therefore
\begin{eqnarray}
\frac{\partial}{\partial \alpha} \tilde{p}_i(\alpha) &=& \frac{\partial}{\partial \alpha} \frac{(p_i^\star)^{2-\alpha}}{\sum_j (p_j^\star)^{2-\alpha}}\nonumber\\
&=& \frac{-(p_i^{\star})^{2-\alpha} \log p_i^{\star} \sum_j (p_j^\star)^{2-\alpha} + (p_i^{\star})^{2-\alpha} \sum_j (p_j^{\star})^{2-\alpha} \log p_j^{\star}}{\left( \sum_j (p_j^\star)^{2-\alpha} \right)^2}\nonumber\\
&=& -\tilde{p}_i(\alpha)\log p_i^{\star} + \tilde{p}_i(\alpha) \sum_j \tilde{p}_j(\alpha) \log p_j^{\star}.
\end{eqnarray}
Differentiating the numerator and denominator in Eq.~\ref{eq:gradient_alpha_fraction}, we get:
\begin{eqnarray}\label{eq:gradient_alpha_shannon_01}
    \frac{\partial p_i^{\star}}{\partial \alpha} &=&
    \lim_{\alpha \rightarrow 1^+} \frac{(1 + (\alpha-1)\HHs(\bm{p}^{\star})) \tilde{p}_i(\alpha) (\log p_i^{\star} - \sum_j \tilde{p}_j(\alpha) \log p_j^{\star}) - p_i^{\star}\log p_i^{\star} - \tilde{p}_i(\alpha) \HHs(\bm{p}^{\star})}{2(\alpha-1)} \nonumber\\
    &=& A + B,
\end{eqnarray}
with
\begin{eqnarray}\label{eq:A_shannon}
A &=& \lim_{\alpha \rightarrow 1^+} \frac{\HHs(\bm{p}^{\star}) \tilde{p}_i(\alpha) (\log p_i^{\star} - \sum_j \tilde{p}_j(\alpha) \log p_j^{\star}) \HHs(\bm{p}^{\star})}{2}\nonumber\\
&=& \frac{\HHs(\bm{p}^{\star}) p_i^{\star}\log p_i^{\star} + p_i^{\star} (\HHs(\bm{p}^{\star}))^2}{2},
\end{eqnarray}
and
\begin{equation}\label{eq:B_shannon}
B = \lim_{\alpha \rightarrow 1^+} \frac{\tilde{p}_i(\alpha) (\log p_i^{\star} - \sum_j \tilde{p}_j(\alpha) \log p_j^{\star}) - p_i^{\star}\log p_i^{\star} - \tilde{p}_i(\alpha) \HHs(\bm{p}^{\star})}{2(\alpha-1)}.
\end{equation}
When $\alpha\rightarrow 1^+$, $B$ becomes again a $\frac{0}{0}$ indetermination, which we can solve by applying again L'H\^opital's rule. Differentiating the numerator and denominator in Eq.~\ref{eq:B_shannon}:
\begin{eqnarray}\label{eq:B_shannon_02}
B &=& \frac{1}{2}\lim_{\alpha \rightarrow 1^+} \left\{ \tilde{p}_i(\alpha) \log p_i^{\star} \left(\sum_j \tilde{p}_j(\alpha) \log p_j^{\star} - \log p_i^{\star}\right) \right. \nonumber\\
&& - \tilde{p}_i(\alpha) \left(\sum_j \tilde{p}_j(\alpha) \log p_j^{\star} - \log p_i^{\star}\right) \left(\sum_j \tilde{p}_j(\alpha) \log p_j^{\star} + \HHs(\bm{p}^{\star})\right) \nonumber\\
&& \left. - \tilde{p}_i(\alpha) \sum_j \tilde{p}_j(\alpha) \log p_j^{\star} \left(\sum_k \tilde{p}_k(\alpha) \log p_k^{\star} - \log p_j^{\star}\right)\right\}\nonumber\\
&=& \frac{-p_i^{\star} \log p_i^{\star}(\HHs(\bm{p}^{\star}) + \log p_i^{\star})
+p_i^{\star} \sum_j p_j^{\star} \log p_j^{\star}(\HHs(\bm{p}^{\star}) + \log p_j^{\star})}{2}\nonumber\\
&=& \frac{-\HHs(\bm{p}^{\star}) p_i^{\star}\log p_i^{\star} - p_i^{\star} (\HHs(\bm{p}^{\star}))^2 - p_i^{\star}\log^2 p_i^{\star} + p_i^{\star} \sum_j p_j^{\star}\log^2 p_j^{\star}}{2}.
\end{eqnarray}
Finally, summing Eq.~\ref{eq:A_shannon} and Eq.~\ref{eq:B_shannon_02}, we get
\begin{eqnarray}\label{eq:gradient_alpha_shannon_02}
    \frac{\partial p_i^{\star}}{\partial \alpha}\bigg|_{\alpha=1} &=& \frac{- p_i^{\star}\log^2 p_i^{\star} + p_i^{\star} \sum_j p_j^{\star}\log^2 p_j^{\star}}{2}.
\end{eqnarray}

\subsection{Summary}

To sum up, we have the following expression for the Jacobian of $\aentmax$ with respect to $\alpha$:

\begin{equation}\label{eq:final_gradient_alpha_supp}
    \frac{\partial p_i^{\star}}{\partial \alpha} =
    \left\{
    \begin{array}{ll}
    \frac{p_i^{\star} - \tilde{p}_i(\alpha)}{(\alpha-1)^2} - \frac{p_i^{\star}\log p_i^{\star} + \tilde{p}_i(\alpha)\HHs(\bm{p}^{\star})}{\alpha-1}, & \text{for $\alpha > 1$}\\
    \frac{- p_i^{\star}\log^2 p_i^{\star} + p_i^{\star} \sum_j p_j^{\star}\log^2 p_j^{\star}}{2}, & \text{for $\alpha = 1$.}
    \end{array}
    \right.
\end{equation}

\end{document}